\newcommand{\name}{\textsc{HeroFilter}}
\newtheorem{definition}{Definition}
\title{\name: Adaptive Spectral Graph Filter for Varying Heterophilic Relations}
\author{%
  Shuaicheng Zhang\thanks{Equal contribution.} \\
  Virginia Tech \\
  \texttt{zshuai8@vt.edu} \\
  \And
  Haohui Wang\footnotemark[1] \\
  Virginia Tech \\
  \texttt{haohuiw@vt.edu} \\
  \And
  Junhong Lin \\
  MIT \\
  \texttt{junhong@mit.edu} \\
  \And
  Xiaojie Guo \\
  IBM Research \\
  \texttt{xiaojie.guo@ibm.com} \\
  \And
  Yada Zhu \\
  IBM Research \\
  \texttt{yzhu@ibm.com} \\
  \And
  Si Zhang \\
  Meta AI\\
  \texttt{sizhang@meta.com} \\
  \And
  Dongqi Fu \\
  Meta AI\\
  \texttt{dongqifu@meta.com} \\
  \And
  Dawei Zhou \\
  Virginia Tech \\
  \texttt{zhoud@vt.edu} \\
}
\begin{document}

\maketitle
\begin{abstract}
Graph heterophily, where connected nodes have different labels, has attracted significant interest recently. Most existing works adopt a simplified approach - using low-pass filters for homophilic graphs and high-pass filters for heterophilic graphs. However, we discover that the relationship between graph heterophily and spectral filters is more complex - the optimal filter response varies across frequency components and does not follow a strict monotonic correlation with heterophily degree. This finding challenges conventional fixed filter designs and suggests the need for adaptive filtering to preserve expressiveness in graph embeddings.
Formally, natural questions arise:
\emph{Given a heterophilic graph $\mathcal{G}$, how and to what extent will the varying heterophily degree of $\mathcal{G}$ affect the performance of GNNs? How can we design adaptive filters to fit those varying heterophilic connections?}
Our theoretical analysis reveals that the average frequency response of GNNs and graph heterophily degree do not follow a strict monotonic correlation, necessitating adaptive graph filters to guarantee good generalization performance. Hence, we propose \textbf{\name}, a simple yet powerful GNN, which extracts information across the heterophily spectrum and combines salient representations through adaptive mixing. \name's superior performance achieves up to \textbf{9.2}\% accuracy improvement over leading baselines across homophilic and heterophilic graphs.
\end{abstract}

\section{Introduction}
Graph Neural Networks (GNNs) have emerged as powerful tools for learning from graph-structured data across a broad range of domains~\cite{huang2020skipgnn, DBLP:conf/sigir/FuH21, zhang-etal-2022-extracting, zhao2019semantic, 10.1145/3637528.3671880, 10.5555/3692070.3694165, DBLP:journals/tmlr/ZhengFMH24, DBLP:conf/acl/0006ZJFJBH025, DBLP:conf/iclr/0003FTMH25}. Despite their empirical success, GNNs are known to suffer performance degradation when applied to graphs that exhibit \textit{heterophily}, a structural property where connected nodes tend to have dissimilar features or labels. This stands in contrast to the \textit{homophily} assumption, foundational to many GNN architectures, which presumes that adjacent nodes share similar attributes. A growing body of work~\cite{zhu2021graph, zheng2022graph, DBLP:conf/nips/LuanHLZZZCP22, zhang2022graphless, DBLP:conf/iclr/WangHZFYCHWYL25} has shown that this mismatch in assumptions significantly limits the effectiveness of conventional GNNs, often rendering them less effective than simple multilayer perceptrons (MLPs).

To better understand and mitigate the limitations of GNNs under heterophily, researchers have adopted a spectral view grounded in graph signal processing (GSP)~\cite{NT2019RevisitingGN, gao2023addressing, xu2024revisiting, DBLP:conf/aistats/TieuFWH25}. In this framework, graph signals are decomposed into frequency components via the eigen decomposition of the graph Laplacian, where low-frequency components capture smooth variations across the graph, and high-frequency components capture rapid, localized changes. Classic GNNs such as Graph Convolutional Network (GCN)~\cite{kipf2017semi} and Graph Attention Network (GAT)~\cite{velickovic2018graph} can thus be interpreted as applying low-pass filters, effectively amplifying low-frequency components and suppressing high-frequency noise~\cite{NT2019RevisitingGN}. This design aligns well with homophilic structures, where relevant information is concentrated in the low-frequency spectrum. However, it is often inadequate for heterophilic graphs, where informative signals may lie in higher-frequency bands.

\begin{wrapfigure}{r}{0.5\textwidth}
  \centering
    \includegraphics[width=0.9\linewidth]{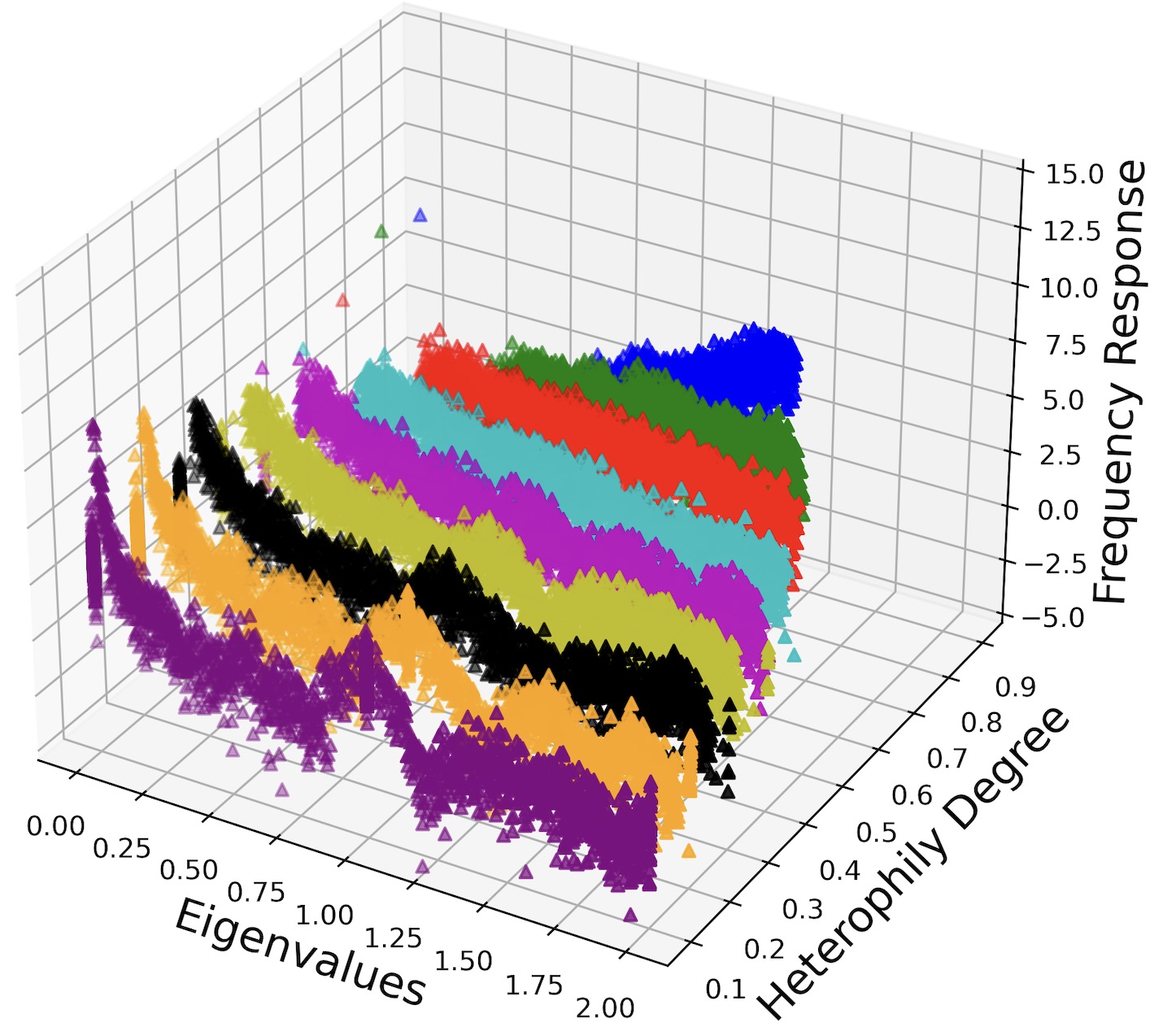}
    \caption{3D visualization showing the underlying relationship among eigenvalues, heterophily degree, and frequency response (in log scale). In particular, we synthesized nine graphs with heterophily degrees ranging from 0.1 to 0.9. Each curve with a unique color corresponds to a well-trained spectral graph filter~\cite{tang2019chebnet} on each graph.}
    \label{fig:eigen}
\end{wrapfigure}


To address this, recent works have proposed filter-based GNNs that explicitly incorporate high-frequency information~\cite{he2021bernnet, li2021beyond}, and empirical studies have supported the effectiveness of high-pass or mixed filters for heterophilic settings~\cite{luan2022revisiting, li2021beyond}. Nevertheless, most existing approaches rely on a simplifying assumption: low-pass filters are used for homophilic graphs, and high-pass filters are used for heterophilic ones. This binary perspective assumes a monotonic relationship between the heterophily level of a graph and its optimal spectral filter, an assumption that we show is often violated in practice.

To investigate this assumption, we conduct controlled experiments on synthetic graphs with varying heterophily degrees. As shown in Figure~\ref{fig:eigen}, we observe that the frequency responses of trained spectral filters~\cite{tang2019chebnet} exhibit complex, non-monotonic behavior across the spectrum. For example, in graphs with low heterophily (e.g., 0.1), the learned filters reveal significant activation in both low- and mid-frequency bands, contrary to the expectation of a purely low-pass response. Similarly, graphs with high heterophily (e.g., 0.9) do not exhibit a purely high-pass response, retaining strong low-frequency components. These findings suggest that the relationship between heterophily and spectral response is more intricate than previously assumed, and that fixed filter types (low-pass or high-pass) are insufficient for capturing this complexity.
These observations motivate two central research questions:

\textbf{Q1 (Theoretical Understanding):} \textit{How can we characterize the relationship between graph heterophily, spectral filters, and the prediction performance of GNNs?}

\textbf{Q2 (Adaptive Computation):} \textit{How can we design adaptive filters and GNN models that perform robustly across graphs with varying and possibly mixed heterophily patterns?}


To address Q1, we conduct a theoretical analysis that formally establishes the connection between the heterophily degree of a graph, its spectral representation, and the performance of GNNs. We introduce a novel spectral-domain measure of heterophily and derive an error bound that highlights the limitations of fixed filtering strategies and motivates the need for adaptivity in spectral design.

To address Q2, we propose \textbf{\name}, a novel GNN architecture inspired by the MLP-Mixer~\cite{liu2023survey, lin2022survey}. \name\ consists of two key components: (1) a \textbf{Patcher} that identifies spectrally relevant neighbors for each node via adaptive polynomial filters, and (2) a \textbf{Mixer} that aggregates and transforms patch representations along both the patch and feature dimensions. This architecture allows \name\ to modulate its spectral response based on the graph structure and heterophily level, while remaining computationally efficient. We further introduce \textbf{Fast-\name}, a scalable variant that avoids eigen decomposition through efficient approximation.





Beyond conceptual proof and proposed theories, we conduct extensive experiments on homophilic, heterophilic, and large-scale real-world graphs. \name\ consistently achieves state-of-the-art accuracy, improving over strong baselines by up to 9.2\%. Data and code are available~\footnote{\url{https://github.com/zshuai8/HeroFilter}}.

\section{Preliminaries}
\noindent\textbf{Notation.}
We denote scalars by regular lowercase letters (e.g., $c$), vectors by bold lowercase letters (e.g., $\mathbf{r}$), and matrices by bold uppercase letters (e.g., $\mathbf{X}$). A graph is represented as $\mathcal{G} = (\mathcal{V}, \mathcal{E}, \mathbf{X})$, where $\mathcal{V}$ is the set of nodes, $\mathcal{E} \subseteq \mathcal{V} \times \mathcal{V}$ is the set of edges, and $\mathbf{X}$ is the node feature matrix. Let $n = |\mathcal{V}|$ be the number of nodes. We denote by $\mathbf{A}$ and $\mathbf{L}$ the adjacency and Laplacian matrices, respectively, and by $\tilde{\mathbf{A}}$ and $\tilde{\mathbf{L}}$ their normalized forms. A summary of key notations is provided in the Appendix~\ref{sec:notation}.

\noindent\textbf{Graph Spectral Filters.}
Spectral GNNs leverage the graph signal processing framework to define convolutional operations in the spectral domain. The eigendecomposition of the normalized Laplacian $\tilde{\mathbf{L}}$ yields $\tilde{\mathbf{L}} = \mathbf{U} \mathbf{\Lambda} \mathbf{U}^\top$, where $\mathbf{U} \in \mathbb{R}^{n \times n}$ is the matrix of orthonormal eigenvectors forming the graph Fourier basis, and $\mathbf{\Lambda}$ is a diagonal matrix containing the corresponding eigenvalues. Given a graph signal $\mathbf{x} \in \mathbb{R}^n$, its graph Fourier transform is defined as $\hat{\mathbf{x}} = \mathbf{U}^\top \mathbf{x}$, and the inverse transform is $\mathbf{x} = \mathbf{U} \hat{\mathbf{x}}$. The spectral graph convolution between a signal $\mathbf{x}$ and a filter $g$ is defined as:
\begin{equation}
    \mathbf{x} *_{\mathcal{G}} g = \mathbf{U} g(\mathbf{\Lambda}) \mathbf{U}^\top \mathbf{x},
\end{equation}
where $g(\mathbf{\Lambda})$ is a learnable filter function applied in the spectral domain.

The GCN~\cite{kipf2017semi} can be interpreted as a spectral GNN that employs a first-order Chebyshev polynomial to approximate the filter function, yielding $g(\mathbf{\Lambda}) = (\mathbf{I} + \mathbf{\Lambda})^{-1}$. This corresponds to a low-pass filter, which suppresses high-frequency components and retains smooth signal patterns across the graph~\cite{NT2019RevisitingGN}.

\noindent\textbf{Graph Heterophily.}
Many GNN models implicitly assume homophily, where connected nodes are expected to have similar features or labels~\cite{NT2019RevisitingGN}. However, in practice, numerous real-world graphs violate this assumption and instead exhibit high heterophily—connections between nodes of differing labels or characteristics~\cite{luan2024heterophilic}. To quantify this, we adopt the notion of node-level heterophily~\cite{zheng2022graph}, defined as follows:

\begin{definition}[Node Heterophily]
Let $\mathcal{N}(v_i) = \{v_j \in \mathcal{V} : (v_i, v_j) \in \mathcal{E}\}$ denote the neighbors of node $v_i$, and let $y_i$ denote its label. The node heterophily $h_i$ for node $v_i$ is defined as:
\begin{equation}
    h_i = \frac{|\{v_j \in \mathcal{N}(v_i) : y_j \neq y_i\}|}{|\mathcal{N}(v_i)|}.
\end{equation}
\end{definition}

The heterophily degree vector $\mathbf{h} = (h_0, \ldots, h_{n-1}) \in \mathbb{R}^n$ captures the heterophily level of each node, and the overall heterophily of the graph is computed as the average: $\frac{1}{n} \sum_{i=0}^{n-1} h_i$.

High-heterophily graphs are common in domains such as social networks, citation graphs, and web hyperlinks. Benchmark datasets including \textit{Texas}, \textit{Squirrel}, \textit{Chameleon}, \textit{Cornell}, \textit{Wisconsin}, and \textit{Actor}~\cite{rozemberczki2021multi, 10.1145/1557019.1557108, garcia2016using} are representative examples. In such settings, conventional GCN-based message passing, which aggregates information from neighboring nodes, often leads to feature smoothing that can obscure useful class-distinctive signals, resulting in suboptimal performance.

\section{Theoretical Bound in Terms of Graph Heterophily, Graph Filter, and Prediction Performance}\label{sec:theory}

In this section, we address \textbf{Q1} by theoretically analyzing the relationship among the graph filter, the heterophily degree of a graph $\mathcal{G}$, and the generalization performance of GNNs. We begin by demonstrating that the average spectral response of a filter and the heterophily level of the graph do not follow a simple monotonic relationship. This result, formalized in Proposition~\ref{property:filter}, motivates the need for adaptive filtering. We then establish that adaptive filters can, in principle, align closely with arbitrary label distributions (Proposition~\ref{property:align}). Finally, we derive a generalization error bound (Theorem~\ref{theorem:bound}) that quantifies how graph heterophily and spectral filter design jointly affect learning performance. Full proofs of all results are provided in the Appendix~\ref{sec:proof}.

\paragraph{Spectral Characterization of Heterophily.}
While graph filters are naturally defined in the spectral domain, heterophily is typically measured in the spatial domain. To bridge this gap, we introduce a heterophily degree vector in the spectral domain via the graph Fourier transform.

\begin{definition}[Heterophily Degree Vector in Spectral Domain]
\label{def:degree-hetero}
Let $\tilde{\mathbf{A}} = \mathbf{U} \mathbf{\Lambda} \mathbf{U}^\top$ be the eigendecomposition of the normalized adjacency matrix $\tilde{\mathbf{A}}$, where $\mathbf{U}$ contains the eigenvectors and $\mathbf{\Lambda}$ is the diagonal matrix of eigenvalues. Then the heterophily degree vector in the spectral domain $\hat{\mathbf{h}} \in \mathbb{R}^n$ is defined as:
\[
\hat{\mathbf{h}} = \mathbf{U}^\top \mathbf{h},
\]
where $\mathbf{h}$ is the heterophily degree vector (spatial) and $\hat{\mathbf{h}} = (\hat{h}_0, \ldots, \hat{h}_{n-1})$, $\hat{h}_i$
is heterophily of the $i$-th element in the spectral domain.
\end{definition}

\paragraph{Non-Monotonicity of Frequency Response and Heterophily.}
We next examine how the average spectral response of a graph filter relates to the heterophily in the spectral domain.

\begin{restatable}{proposition}{propertyfilter}
\label{property:filter}
Let $\tilde{\mathbf{A}}$ be the normalized adjacency matrix with eigendecomposition $\tilde{\mathbf{A}} = \mathbf{U} \mathbf{\Lambda} \mathbf{U}^\top$, where $\lambda_0 \leq \cdots \leq \lambda_{n-1} = 1$. Then the average filter response is lower bounded by:
\begin{equation}
\sum_{i=0}^{n-1} \frac{g(\lambda_i)}{n} \geq \frac{\sum_{i=0}^{n-1} \log |\hat{h}_i|}{n \left( \log \sum_{i=0}^{n-1} g(\lambda_i)|\hat{h}_i| - \log \sum_{i=0}^{n-1} g(\lambda_i) \right)}.
\end{equation}
\end{restatable}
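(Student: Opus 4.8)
The plan is to recast the right-hand side in terms of a single weighted mean and then attack it with Jensen's inequality. First I would assume the filter is nonnegative, $g(\lambda_i)\ge 0$, set $S=\sum_{i=0}^{n-1} g(\lambda_i)$, and introduce the weights $w_i = g(\lambda_i)/S$, which form a probability distribution over the spectrum. The key opening move is to recognize that the denominator collapses to the logarithm of a single $g$-weighted arithmetic mean of $|\hat h_i|$,
\[
\log \sum_{i=0}^{n-1} g(\lambda_i)|\hat h_i| - \log \sum_{i=0}^{n-1} g(\lambda_i) = \log \sum_{i=0}^{n-1} w_i |\hat h_i| =: D,
\]
so that the target becomes $\tfrac1n S \ge \big(\sum_{i} \log|\hat h_i|\big)/(nD)$. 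In words, this is a comparison between the average filter response $\bar g = S/n$ and the ratio of the uniform log-geometric mean of $|\hat h|$ to the logarithm of its $g$-weighted arithmetic mean.

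Next I would apply Jensen's inequality to the concave function $\log$ with the weights $w_i$ (equivalently, weighted AM--GM):
\[
D = \log \sum_{i=0}^{n-1} w_i |\hat h_i| \ \ge\ \sum_{i=0}^{n-1} w_i \log|\hat h_i| = \frac1S \sum_{i=0}^{n-1} g(\lambda_i)\log|\hat h_i|,
\]
which rearranges to the clean intermediate bound $S D \ge \sum_{i=0}^{n-1} g(\lambda_i)\log|\hat h_i|$. This is the analytic heart of the argument: it isolates the filter response $S$ multiplied by the denominator term $D$, and is where the non-monotone interplay between $g(\lambda_i)$ and $|\hat h_i|$ is encoded.

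The remaining step is to convert this into a statement about the \emph{unweighted} numerator $\sum_i \log|\hat h_i|$ and then divide through by $nD$ to expose $\bar g$. Under a normalization $g(\lambda_i)\in[0,1]$, each coordinate obeys $g(\lambda_i)\log|\hat h_i|\ge \log|\hat h_i|$ precisely where $|\hat h_i|\le 1$ (and the reverse where $|\hat h_i|>1$), which is the mechanism that lets one trade the $g$-weighted sum of logs for the plain sum, yielding a comparison between $SD$ and $\sum_i \log|\hat h_i|$ in the appropriate direction.

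The hard part will be the sign bookkeeping in this last step. The term $\log|\hat h_i|$ flips sign according to whether the spectral heterophily coefficient $|\hat h_i|$ exceeds $1$, and $D$ itself may be negative; since cross-multiplying by $nD$ reverses the inequality when $D<0$, the direction of the final bound is not automatic and the termwise replacement above only points one way uniformly inside a fixed regime. Making the argument rigorous therefore requires pinning down that regime --- the normalization of $g$, the admissible range of the Fourier coefficients $\hat h_i$, and the sign of $D$ --- under which all the termwise comparisons align with the claimed direction. I would validate the chain with the natural tightness check: when $|\hat h_i|$ is constant across the spectrum the weighted and uniform means coincide, both sides reduce to the same log, and the bound becomes an equality, which should serve as a consistency anchor for fixing the correct regime.
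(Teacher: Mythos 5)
Your proposal follows essentially the same route as the paper's proof: the paper's weighted AM--GM step is exactly your Jensen-with-weights $w_i = g(\lambda_i)/S$ argument, the paper likewise uses $g(\lambda_i)\in[0,1]$ to trade the $g$-weighted log-sum for the uniform one (which, as you note, silently requires $|\hat h_i|\le 1$), and it then divides through by $n$ times the log-ratio without addressing its sign. The ``hard part'' you flag --- the sign bookkeeping for $\log|\hat h_i|$ and for the denominator $D$ --- is precisely what the paper's own proof glosses over, so your attempt is no less complete than the published argument.
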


This result reveals that the average filter response $\frac{1}{n} \sum_i g(\lambda_i)$ and the average heterophily in the spectral domain $\frac{1}{n} \sum_i \hat{h}_i$ do not follow a monotonic relationship. As shown in Figure~\ref{fig:eigen}, graphs with low overall heterophily in the spatial domain exhibit strong responses in mid-frequency bands, contradicting the assumption that low-pass filters are always optimal in such cases.

\paragraph{Need for Adaptive Filtering.}
The non-monotonic and graph-dependent nature of heterophily–frequency interactions suggests that fixed filters (e.g., purely low-pass or high-pass) are suboptimal. To accommodate diverse frequency needs, we consider an adaptive polynomial filter:
\begin{equation}
g(\mathbf{\Lambda}) = \sum_{k=1}^K \sigma(\mathbf{w}_k \odot \mathbf{\Lambda}^k),
\end{equation}
where $\sigma$ is an activation function, $\mathbf{w}_k \in \mathbb{R}^n$ are learnable weights, and $\odot$ denotes the Hadamard product. GCN~\cite{kipf2017semi} is a special case where $g(\mathbf{\Lambda}) = (\mathbf{I} + \mathbf{\Lambda})^{-1}$ corresponds to a low-pass filter~\cite{NT2019RevisitingGN}.

We next show that such adaptive filters can match any label pattern in the frequency domain.

\begin{restatable}{proposition}{propertyalign}
\label{property:align}
Let $\mathbf{Y} = (y_1, \ldots, y_n)$ be the label vector with $y_i \in \{0, \ldots, C-1\}$, and let $\mathbf{\Lambda}$ be the eigenvalue matrix of $\tilde{\mathbf{A}}$, assuming all eigenvalues are nonzero. Then there exist weights $\{\mathbf{w}_k\}_{k=1}^K$ such that:
\[
\text{Align}(g(\mathbf{\Lambda}), \mathbf{Y}) = 1,
\]
where $g(\mathbf{\Lambda}) = \sum_{k=1}^K \sigma(\mathbf{w}_k \odot \mathbf{\Lambda}^k)$, $\sigma(0) = 0$, and $\text{Align}(\cdot, \cdot)$ denotes cosine similarity.
\end{restatable}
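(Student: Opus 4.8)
The plan is to reduce the claim to an elementary interpolation argument exploiting the per-frequency freedom of the adaptive filter. First I would fix the interpretation: identify the diagonal operator $g(\mathbf{\Lambda})$ with the response vector $\mathbf{g} = (g(\lambda_0), \ldots, g(\lambda_{n-1})) \in \mathbb{R}^n$, whose $i$-th entry is $g(\lambda_i) = \sum_{k=1}^K \sigma(w_{k,i}\,\lambda_i^k)$, where $w_{k,i}$ is the $i$-th coordinate of $\mathbf{w}_k$. Since $\mathrm{Align}$ is cosine similarity, $\mathrm{Align}(\mathbf{g}, \mathbf{Y}) = 1$ holds if and only if $\mathbf{g} = c\,\mathbf{Y}$ for some scalar $c > 0$ (this presumes $\mathbf{Y} \neq \mathbf{0}$, which I would flag as an implicit nondegeneracy assumption). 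So it suffices to construct weights for which $\mathbf{g} = \mathbf{Y}$ exactly.

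The key structural observation---and the reason adaptivity buys full expressiveness here---is that each $\mathbf{w}_k$ lives in $\mathbb{R}^n$, so the response $g(\lambda_i)$ at frequency $i$ depends only on the coordinates $(w_{1,i}, \ldots, w_{K,i})$ and is completely decoupled from the responses at the other frequencies. The matching problem $\mathbf{g} = \mathbf{Y}$ therefore splits into $n$ independent scalar equations $\sum_{k=1}^K \sigma(w_{k,i}\,\lambda_i^k) = y_i$, one per index $i$, each carrying its own free variables. I would note that this decoupling also sidesteps any difficulty from repeated eigenvalues, which would otherwise constrain a genuine shared-coefficient polynomial filter.

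Next I would solve each scalar equation. Because every eigenvalue is nonzero, $\lambda_i^k \neq 0$, so the product $w_{k,i}\,\lambda_i^k$ sweeps all of $\mathbb{R}$ as $w_{k,i}$ varies. For a node with $y_i = 0$, setting $w_{k,i} = 0$ for all $k$ gives $g(\lambda_i) = \sum_k \sigma(0) = 0$ by the hypothesis $\sigma(0) = 0$. For $y_i \neq 0$, a single active term ($K = 1$ already suffices) lets me pick $w_{1,i}$ so that $\sigma(w_{1,i}\lambda_i) = y_i$. Collecting these coordinates into the vectors $\mathbf{w}_k$ yields $\mathbf{g} = \mathbf{Y}$, hence alignment one.

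The main obstacle is attainability: solving $\sigma(w_{1,i}\lambda_i) = y_i$ requires $y_i$ to lie in the range of $\sigma$, whereas the statement only assumes $\sigma(0) = 0$. I would close this gap in one of two ways. First, I would make explicit that $\sigma$ is (at least piecewise) surjective onto an interval of the nonnegative reals containing the label values $\{0, \ldots, C-1\}$; standard choices such as the identity, ReLU, or leaky ReLU all qualify, and for these one simply solves $w_{1,i} = \sigma^{-1}(y_i)/\lambda_i$ on the appropriate branch. Second, I would invoke the scale invariance of cosine similarity: since only $\mathbf{g} = c\,\mathbf{Y}$ is needed for some $c > 0$, I may rescale the targets into any convenient subinterval of $\sigma$'s range, which removes the need for exact surjectivity and only requires that $\sigma$ attain a suitable interval of positive values. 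Pinning down the precise hypotheses on $\sigma$ is the one place where the argument needs care beyond routine bookkeeping.
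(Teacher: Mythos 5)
Your proposal is correct and follows essentially the same route as the paper: reduce $\mathrm{Align}(g(\mathbf{\Lambda}),\mathbf{Y})=1$ to constructing $g(\mathbf{\Lambda})=c\mathbf{Y}$, exploit the per-frequency decoupling of the coordinates $w_{k,i}$, set $w_{k,i}=0$ when $y_i=0$ (using $\sigma(0)=0$), and solve a scalar equation $\sigma(w_{k,i}\lambda_i^k)=cy_i/K$ otherwise, using the free scale $c$ to land inside the range of $\sigma$. The only cosmetic differences are that the paper spreads the target evenly over all $K$ terms and handles attainability via a bounded/unbounded case split on a continuous, monotonic $\sigma$ (an assumption it, like you, must add beyond the stated $\sigma(0)=0$), whereas you correctly flag these implicit hypotheses explicitly.
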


This proposition highlights the expressive power of adaptive filters to align with the target signal, particularly under varying heterophily.

\paragraph{Generalization Error Bound.}
We now quantify how filter design and heterophily in the spectral domain affect generalization error.

\begin{restatable}{theorem}{theorembound}
\label{theorem:bound}
Consider a binary classification task on a graph $\mathcal{G}$ with $n$ nodes. Let $\mathbf{X} = (\mathbf{X}_0, \mathbf{X}_1)$ be the filtered node features for nodes belonging to class 0 and class 1, respectively. $\mathbf{Y} = (\mathbf{y}_0, \mathbf{y}_1)$ be the label indicators for classes 0 and 1. Let $\delta$, $\eta$ be the spectral coefficients of the label and feature differences, respectively. Then the error is upper bounded as:
\begin{equation}
\begin{aligned}
\overline{Er}(\mathbf{X}, \mathbf{Y}) \leq c_1 - \frac{
\min\limits_{i \in \mathcal{I}_{g, \delta, \eta}} 
\psi_{\frac{1}{g(1 - \lambda_i) \delta_i}}(\eta_i) \cdot \delta_i \sum\limits_{i \in \mathcal{I}_{\delta, \tilde{\eta}}} \log |\hat{h}_i|
}{
2n \log \sum\limits_{i \in \mathcal{I}_{\delta, \tilde{\eta}}} g(1 - \lambda_i) |\hat{h}_i|
- 2n \log \sum\limits_{i \in \mathcal{I}_{\delta, \tilde{\eta}}} g(1 - \lambda_i)
},
\end{aligned}
\end{equation}
where $c_1$ is a constant, $\mathcal{I}_\delta = \{i \mid \delta_i \neq 0\}$, and $\psi(x) = \min\{\max\{x, -1\}, 1\}$.
\end{restatable}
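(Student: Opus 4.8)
The plan is to write the classification error as a baseline constant minus a separation term that measures how well the filtered features discriminate the two classes, to expand that separation frequency by frequency in the graph Fourier basis, and finally to trade the resulting filter-response factor for the spectral-heterophily factor using Proposition~\ref{property:filter}. Concretely, I would fix a linear/threshold decision rule on the filtered features $\mathbf{X}$ and express $\overline{Er}(\mathbf{X},\mathbf{Y})$ as $c_1$ minus a quantity proportional to the inner product of the between-class feature difference $\mathbf{X}_0-\mathbf{X}_1$ and the label-difference indicator $\mathbf{y}_0-\mathbf{y}_1$; a larger separation forces a smaller error, which is why the bound has the form $c_1 - (\cdots)$. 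Passing to the spectral domain via $\mathbf{U}^\top$ and using that the filter acts diagonally, this inner product becomes $\sum_i g(1-\lambda_i)\,\eta_i\,\delta_i$, where $\delta_i$ and $\eta_i$ are the spectral coefficients of the label and feature differences and $g$ is evaluated at the Laplacian eigenvalues $1-\lambda_i$ (recall $\tilde{\mathbf{L}} = \mathbf{I} - \tilde{\mathbf{A}}$, so the adjacency eigenvalues $\lambda_i$ map to $1-\lambda_i$). Only frequencies with nonzero label coefficient contribute, which is what restricts the sums to $\mathcal{I}_\delta$ and its refinement $\mathcal{I}_{\delta,\tilde{\eta}}$.

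Next I would lower-bound this separation componentwise. For each active frequency the feature coefficient $\eta_i$ is normalized against the filter-scaled label coefficient $g(1-\lambda_i)\delta_i$, and the clamp $\psi(x)=\min\{\max\{x,-1\},1\}$ confines each per-frequency alignment to the cosine-similarity range $[-1,1]$, exactly the alignment notion used in Proposition~\ref{property:align}. Replacing every per-frequency alignment by the smallest one over the active set $\mathcal{I}_{g,\delta,\eta}$ gives a uniform factor $\min_{i\in\mathcal{I}_{g,\delta,\eta}}\psi_{\frac{1}{g(1-\lambda_i)\delta_i}}(\eta_i)$ that can be pulled out of the sum, leaving the weighted filter response $\delta_i\sum_i g(1-\lambda_i)$ on $\mathcal{I}_{\delta,\tilde{\eta}}$. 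This yields a lower bound of the form $\text{separation}\ \geq\ \big(\min_i\psi_{\frac{1}{g(1-\lambda_i)\delta_i}}(\eta_i)\cdot\delta_i\big)\cdot\tfrac{1}{2n}\sum_{i\in\mathcal{I}_{\delta,\tilde{\eta}}} g(1-\lambda_i)$.

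The crux of the argument is then to convert the average filter response into the heterophily expression by invoking Proposition~\ref{property:filter} with $\lambda_i\mapsto 1-\lambda_i$ and with the sums restricted to $\mathcal{I}_{\delta,\tilde{\eta}}$. That proposition lower-bounds $\tfrac{1}{n}\sum_i g(1-\lambda_i)$ by $\frac{\sum_i\log|\hat{h}_i|}{n(\log\sum_i g(1-\lambda_i)|\hat{h}_i|-\log\sum_i g(1-\lambda_i))}$, so substituting this into the previous display and keeping the positive leading factor gives $\text{separation}\ \geq\ \text{(the stated fraction)}$. Because the separation enters the error with a negative sign, this lower bound on the separation is precisely an upper bound on $\overline{Er}$; collecting the baseline into $c_1$ reproduces the claimed inequality.

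The main obstacle I expect is bookkeeping rather than any single inequality: making the three index sets ($\mathcal{I}_\delta$, $\mathcal{I}_{\delta,\tilde{\eta}}$, $\mathcal{I}_{g,\delta,\eta}$) and the two spectral conventions (adjacency versus Laplacian eigenvalues) consistent across the decomposition and the invocation of Proposition~\ref{property:filter}. In particular I must check that discarding zero coefficients never reverses an inequality, that the leading factor $\min_i\psi_{\frac{1}{g(1-\lambda_i)\delta_i}}(\eta_i)\cdot\delta_i$ is nonnegative so that multiplying through Proposition~\ref{property:filter} preserves its direction, and that the logarithmic denominator $\log\sum_i g(1-\lambda_i)|\hat{h}_i|-\log\sum_i g(1-\lambda_i)$ stays positive on $\mathcal{I}_{\delta,\tilde{\eta}}$ so the final bound keeps its sign.
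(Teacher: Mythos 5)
Your overall skeleton matches the paper's proof: write the error as a constant minus a separation term, expand the separation spectrally as a sum over $\psi(\eta_i\, g(1-\lambda_i)\,\delta_i)$, peel off the clamp via the rescaled coefficients $\tilde{\eta}_i = \psi_{\frac{1}{g(1-\lambda_i)\delta_i}}(\eta_i)$, pull out the minimum of $\tilde{\eta}_i\delta_i$ over the active index set, and then invoke Proposition~\ref{property:filter} (with $\lambda_i \mapsto 1-\lambda_i$ and the sums restricted to $\mathcal{I}_{\delta,\tilde{\eta}}$) to trade the residual filter-response sum for the heterophily expression. The second half of your plan is essentially the paper's argument.

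The genuine gap is in your first step. You assert that fixing ``a linear/threshold decision rule'' lets you write $\overline{Er}(\mathbf{X},\mathbf{Y})$ as $c_1$ minus a quantity proportional to the inner product of the class-difference features with the label difference, on the grounds that ``a larger separation forces a smaller error.'' That implication is not true for a generic decision rule, and it is precisely where the bulk of the paper's technical work lives. The paper commits to a specific loss, $Er(\mathbf{X}_0,\mathbf{y}_0) = \sum_l \bigl(\frac{1}{1+e^{z_l}} - \mathbf{y}_{0l}\bigr)^2$ with $z_l = g(I-\tilde{\mathbf{A}})(\mathbf{x}_{1l}-\mathbf{x}_{0l})$, and then (i) clamps the argument to $[-1,1]$ and bounds the cost of clamping, $d(x,\psi(x)) \le \frac{1}{(1+e)^2}$; (ii) Taylor-expands the sigmoid around $0$ with an explicit remainder bound $|R(x)| \le \frac{|x|^3}{48}$ obtained from a third-derivative estimate; and (iii) absorbs all the higher-order terms $\|\psi(\mathbf{z})\|_2^2, \|\psi(\mathbf{z})\|_3^3, \|\psi(\mathbf{z})\|_4^4, \|\psi(\mathbf{z})\|_6^6$ into the constant using $\|\psi(\mathbf{z})\|_p^p \le n$. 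Only after all of this does the linear-in-$\psi(\mathbf{z})$ form $c_1 n - \frac{1}{4}\sum_l \psi(\cdots)$ emerge. Without some such argument your bound has no starting point, so you need to either reproduce this reduction or supply an alternative loss model for which the constant-minus-inner-product identity actually holds.

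One further remark: the sign and positivity caveats you flag at the end (nonnegativity of the leading $\min$ factor, positivity of the logarithmic denominator, and the direction of the inequality when discarding zero coefficients) are real concerns, but the paper's own proof does not verify them either; they are implicit assumptions of the statement rather than defects specific to your plan.
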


\begin{wrapfigure}{r}{0.5\textwidth}
    \centering
    \includegraphics[width=0.85\linewidth]{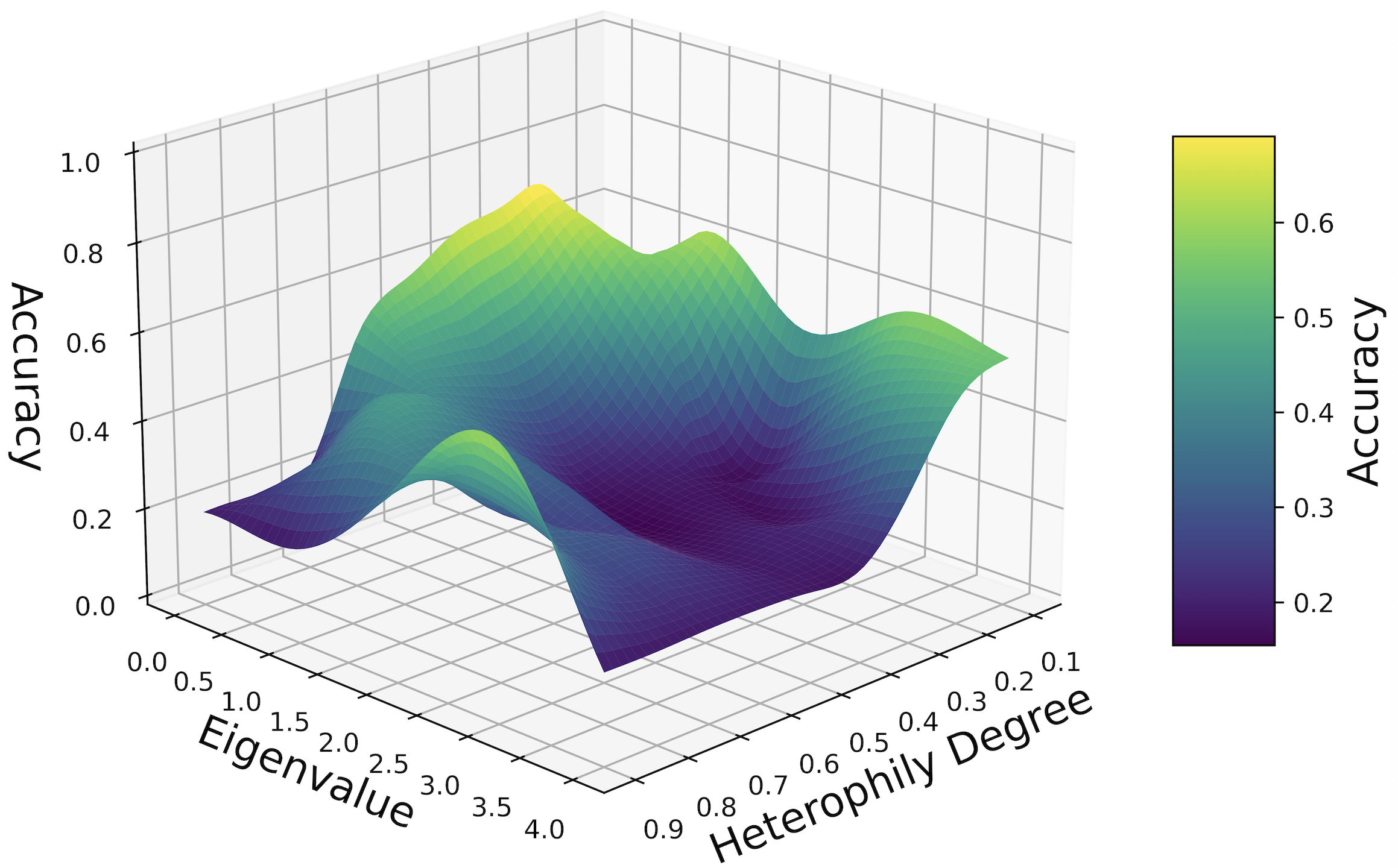}
    \caption{An interpolated figure demonstrating the relationship between filtered eigenvalues, heterophily degree, and accuracy.}
    \label{fig:lambdah}
\end{wrapfigure}
Theorem~\ref{theorem:bound} formally characterizes how filter choice and heterophily in the spectral domain influence prediction performance. To explore this relationship, we synthesized nine graphs with heterophily degrees ranging from 0.1 to 0.9 as shown in Figure~\ref{fig:lambdah}. For each graph, we employed five learnable spectral filters passing specific eigenvalue segments: 0--0.4, 0.4--0.8, 0.8--1.2, 1.2--1.6, and 1.6--2.0. Our synthetic experiments demonstrate that different frequency regions contribute variably to accuracy depending on the heterophily level. This further supports the need for flexible, graph-specific filtering strategies.

\begin{figure*}
    \centering 
    \includegraphics[width=\linewidth]{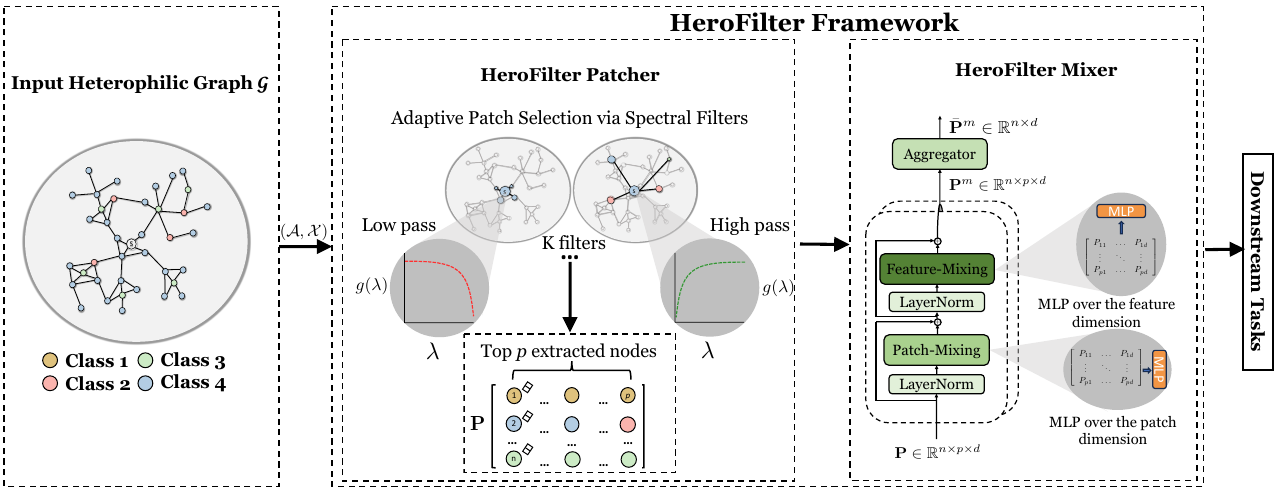}
    \caption{Overview of the \name\ framework, consisting of (1) the \name\ Patcher, which adaptively selects spectrally relevant neighbors using learned filters, and (2) the \name\ Mixer, which aggregates patch features across nodes and dimensions.}
    \label{fig:framework}
\end{figure*}

\section{\name\ Framework}
\label{sec:framework}

The theoretical analysis in Section~\ref{sec:theory} highlights two key limitations of existing GNN approaches when applied to heterophilic graphs. First, the spectral behavior of real-world graphs with varying heterophily is complex and non-monotonic, rendering uniform low-pass or high-pass filters ineffective. Second, different frequency bands contribute differently to prediction performance depending on the graph's heterophily structure. These insights motivate the design of \name, a graph learning framework that adaptively aligns its spectral processing with the graph's heterophily profile.

\name\ consists of two core modules: (1) The \textbf{\name\ Patcher}, which dynamically selects spectrally relevant neighbors for each node by learning adaptive polynomial filters. This step enables the model to attend to contextually important nodes across spectral bands, beyond simple local neighborhoods, and (2) The \textbf{\name\ Mixer}, which processes each node’s patch using a dual-axis MLP architecture that mixes across both spatial (patch) and feature dimensions. This ensures that the model can effectively combine diverse signals arising from the selected patches.

Together, these components allow \name\ to flexibly adapt its receptive field and frequency sensitivity, providing an interpretable and scalable mechanism for handling graphs with varying or mixed heterophily.

\subsection{\name~Patcher}
The goal of the patcher is to construct, for each node, a set of contextually informative neighbors that reflect spectral, rather than purely topological, similarity. Traditional GNNs aggregate over immediate neighbors, implicitly assuming local homophily. This assumption breaks down in heterophilic settings. Thus, we propose to learn a filter $g(\mathbf{\Lambda})$ that scores nodes based on their spectral alignment, enabling a more adaptive notion of neighborhood.

We define the filter using a learnable polynomial function over the eigenvalues of the normalized adjacency matrix:
\begin{equation}
g(\mathbf{\Lambda}) = \sum_{k=1}^K \sigma(\mathbf{w}_k \odot \mathbf{\Lambda}^k),
\end{equation}
where $\sigma$ is a non-linear activation function and $\mathbf{w}_k$ are learnable frequency-specific weights. This parameterization allows the model to selectively emphasize different frequency bands based on training data, consistent with our theoretical observation that fixed low/high-pass filters are suboptimal.

To apply this filter in the graph domain, we transform it via the graph Fourier basis $\mathbf{U}$, resulting in the node-node relevance matrix:
\begin{equation}
\mathbf{R} = \mathbf{U} \left( \sum_{k=1}^K \mathbf{w}_k \odot \mathbf{\Lambda}^k \right) \mathbf{U}^\top.
\end{equation}

Here, $\mathbf{R}_{ij}$ measures the spectral relevance of node $j$ to node $i$ under the learned filter. For each node, we sort all other nodes by their relevance scores and select the top-$p$ most aligned ones to form the patch. Mathematically:
\begin{equation}
\phi(\tilde{\mathbf{A}}) = \text{top-}p_{\text{col}}\left( \mathbf{R} \right).
\end{equation}

The resulting patches are structurally diverse and frequency-aware, allowing the model to capture high-order or cross-cluster interactions often present in heterophilic graphs.

For each selected patch, we extract the corresponding feature vectors from $\mathbf{X}$, yielding a patch tensor:
\begin{equation}
\mathbf{P}_v = \mathbf{X}[\text{top-}p(\mathbf{R}_v)], \quad \mathbf{P} \in \mathbb{R}^{n \times p \times d},
\end{equation}
where $p$ is the patch size and $d$ is the feature dimension. These patches now serve as the input for the next step of the model.

\subsection{\name~Mixer}
Once patches are constructed, we need to aggregate and transform them into useful node-level representations. Unlike traditional GNNs that rely on fixed aggregation schemes (e.g., mean, sum), our objective is to allow richer, learnable interactions both across patch elements (spatial mixing) and across features (feature mixing). This is motivated by the observation that information relevant to classification may lie in combinations of both positional and attribute-level patterns.

We adapt the MLP-Mixer architecture to this setting by defining two mixing layers:

\textbf{Patch-Mixing Layer.} For each node’s patch $\mathbf{P}_v \in \mathbb{R}^{p \times d}$, we apply an MLP across the patch dimension:
\begin{equation}
    \hat{\mathbf{P}}_v = \mathrm{MLP}(\mathrm{LayerNorm}(\mathbf{P}_v^\top))^\top.
\end{equation}
    This operation allows the model to reason over relationships between the selected nodes in the patch, independently for each feature. It enables the capture of role-based, cross-hop, or structurally asymmetric dependencies.

\textbf{Feature-Mixing Layer.} Next, we apply an MLP across the feature dimension:
\begin{equation}
    \tilde{\mathbf{P}}_v = \mathrm{MLP}(\mathrm{LayerNorm}(\hat{\mathbf{P}}_v)).
\end{equation}
    This allows the model to learn joint representations over node features, enabling flexible composition and transformation of feature information within each patch.

The output of the mixer is a tensor $\tilde{\mathbf{P}} \in \mathbb{R}^{n \times p \times d}$. We then apply a global aggregation function (e.g., mean, sum, or flatten) over the patch dimension to produce a final node representation. This aggregated output is passed to a classification layer.

\subsection{Why Mixing Both Dimensions}
Patch-mixing captures the heterophily in node context by modeling which other nodes influence the representation of a given node. Feature-mixing captures intra-node complexity by transforming raw input features into abstract representations. Therefore, the combination is essential: patch mixing enables adaptivity to structural irregularities (e.g., in heterophilic regions), while feature mixing enables expressive node-level reasoning.

\subsection{Fast-\name\ Patcher}
Although the \name\ Patcher is expressive, its reliance on eigen decomposition can be prohibitive for large-scale graphs. To improve scalability, we propose Fast-\name, an efficient variant that approximates the patcher using iterative proximity ranking, inspired by personalized PageRank and diffusion-based similarity.

We define the following objective for each node $v$:
\begin{equation}
\label{eq:ppr}
\mathcal{J}(\mathbf{r}_v) = c \mathbf{r}_v^\top (\mathbf{I} - \tilde{\mathbf{A}})\mathbf{r}_v + (1 - c) \|\mathbf{r}_v - \mathbf{e}_v\|_2^2,
\end{equation}
where $\mathbf{r}_v$ is a ranking vector and $\mathbf{e}_v$ is a one-hot vector. This objective balances global smoothness and local personalization.

Minimizing $\mathcal{J}(\mathbf{r}_v)$ yields the recurrence:
\begin{equation}
\mathbf{r}_v = c\, \tilde{\mathbf{A}}\, \mathbf{r}_v + (1 - c)\, \mathbf{e}_v,
\end{equation}
which has the closed-form solution:
\[
\mathbf{r}_v = (1 - c)(\mathbf{I} - c\, \tilde{\mathbf{A}})^{-1} \mathbf{e}_v.
\]

To avoid matrix inversion, we approximate this with a truncated Neumann series:
\begin{equation}
\mathbf{r}_v \approx (1 - c) \sum_{k=0}^K c^k\, \tilde{\mathbf{A}}^k\, \mathbf{e}_v.
\end{equation}
This expansion can be interpreted as diffusing information $K$ steps away from $v$, with attenuation $c^k$. Since $\tilde{\mathbf{A}}^k = \mathbf{U} \mathbf{\Lambda}^k \mathbf{U}^\top$, this operation also approximates spectral filtering.

Finally, we construct patches by selecting the top-$p$ entries in each $\mathbf{r}_v$, with the Fast-\name\ patching rule:
\begin{equation}
\phi_{\text{fast}}(\tilde{\mathbf{A}}) = \text{top-}p_{\text{col}}([\mathbf{r}_1, \ldots, \mathbf{r}_n]).
\end{equation}
Since $\mathbf{R}$ can be efficiently precomputed and patch extraction is fully parallelizable, this formulation enables scalable and adaptive patch construction, serving as a practical alternative to spectral filtering. Full pseudocode is provided in the Appendix~\ref{app:algo}.

\begin{table*}[!tb]
    \centering
    \caption{Comparison of different methods in node classification task on homophilic graph datasets. We denote $\textcolor{red}{red}, \textcolor{green}{green}, \textcolor{blue}{blue}$ as the best, second best, and third best performance, respectively. OOM denotes "Out of Memory".}
    \resizebox{0.9\textwidth}{!}{%
        \begin{tabular}{l|cccc}
        \hline
        Model & Cora & CiteSeer & PubMed & OGBN-Arxiv \\ \hline
        $\frac{\sum{h_i}}{n}$ & 0.19 & 0.26 & 0.20 & 0.34 \\ \hline
        MLP & 72.0 $\pm$ 1.7 & 71.8 $\pm$ 1.7 & 85.3 $\pm$ 0.4 & 49.7 $\pm$ 0.6 \\ \hline
        GPRGNN & 86.2 $\pm$ 1.1 & 74.7 $\pm$ 1.8 & 87.6 $\pm$ 0.5 & 64.6 $\pm$ 1.2 \\ \hline
        ChebNet & 85.5 $\pm$ 1.1 & \textcolor{red}{75.4} $\pm$ 1.4 & \textcolor{red}{87.7} $\pm$ 0.5 & OOM \\ \hline
        ChebNetII & 84.0 $\pm$ 1.2 & 71.1 $\pm$ 1.8 & 86.2 $\pm$ 0.3 & 67.0 $\pm$ 0.8 \\ \hline
        APPNP & \textcolor{green}{86.7} $\pm$ 1.0 & 74.5 $\pm$ 1.2 & 86.9 $\pm$ 0.3 & 62.4 $\pm$ 1.4 \\ \hline
        GCNJKNet & \textcolor{blue}{86.5} $\pm$ 1.1 & \textcolor{green}{75.2} $\pm$ 1.5 & 87.4 $\pm$ 0.4 & 65.1 $\pm$ 0.6 \\ \hline
        GCN & 86.1 $\pm$ 1.0 & 74.3 $\pm$ 1.2 & 86.9 $\pm$ 0.4 & 64.9 $\pm$ 0.5 \\ \hline
        GAT & 85.9 $\pm$ 1.0 & 74.7 $\pm$ 1.3 & 85.8 $\pm$ 0.5 & 65.7 $\pm$ 0.6 \\ \hline
        GraphSage & 84.9 $\pm$ 1.4 & 73.4 $\pm$ 1.0 & 86.4 $\pm$ 0.5 & 64.7 $\pm$ 0.7 \\ \hline
        FAGCN & 85.5 $\pm$ 0.7 & 74.8 $\pm$ 1.5 & 87.0 $\pm$ 0.3 & 64.1 $\pm$ 0.2 \\ \hline
        $\text{H}_{2}$GCN & 86.5 $\pm$ 1.2 & 74.6 $\pm$ 1.7 & \textcolor{red}{87.7} $\pm$ 0.4 & OOM \\ \hline
        BM-GCN & 86.1 $\pm$ 1.1 & 74.3 $\pm$ 1.3 & 86.8 $\pm$ 1.1 & OOM \\ \hline
        BernNet & 84.9 $\pm$ 1.7 & 72.3 $\pm$ 1.4 & \textcolor{blue}{87.6} $\pm$ 0.4 & OOM \\ \hline
        \text{G}$^{2}$\text{-GCN} & 80.2 $\pm$ 1.7 & 70.2 $\pm$ 1.4 & 87.0 $\pm$ 0.4 & 62.6 $\pm$ 1.3 \\ \hline
        GOAT & 84.0 $\pm$ 0.8 & 72.1 $\pm$ 1.6 & 87.1 $\pm$ 0.8 & \textcolor{blue}{70.0} $\pm$ 0.4 \\ \hline
        NAGphormer & 85.7 $\pm$ 1.2 & 73.5 $\pm$ 1.2 & 87.5 $\pm$ 0.3 & 68.1 $\pm$ 0.2 \\ \hline
        PolyFormer & 83.2 $\pm$ 2.5 & 73.5 $\pm$ 2.3 & 87.5 $\pm$ 0.2 & \textcolor{green}{70.4} $\pm$ 0.2 \\ \hline
        Exphormer & 84.7 $\pm$ 0.5 & 74.4 $\pm$ 1.2 & \textcolor{green}{87.6} $\pm$ 0.3 & 69.2 $\pm$ 0.5 \\ \hline
        VCR-Graphormer & 83.8 $\pm$ 2.1 & 73.7 $\pm$ 0.5 & \textcolor{red}{87.7} $\pm$ 0.4 & 68.2 $\pm$ 0.2 \\ \hline
        \midrule
        $\name$(Ours) & \textcolor{red}{86.8} $\pm$ 1.5 & \textcolor{blue}{75.0} $\pm$ 1.2 & \textcolor{green}{87.6} $\pm$ 0.3 & \textcolor{red}{70.5} $\pm$ 0.4 \\ \hline
    \end{tabular}%
    }
    \label{tab:homophilic_performance}
\end{table*}

\begin{table*}[!tb]
    \centering
    \caption{Comparison of different methods in node classification task on heterophilic datasets. We denote $\textcolor{red}{red}, \textcolor{green}{green}, \textcolor{blue}{blue}$ as the best, second best, and third best performance, respectively. OOM denotes "Out of Memory".}
    \resizebox{\textwidth}{!}{%
        \begin{tabular}{l|cccccccc}
        \hline
        Model & Snap-Patents & Arxiv-Year & Texas & Squirrel & Chameleon & Cornell & Wisconsin & Actor \\ \hline
        $\frac{\sum{h_i}}{n}$ & 0.93 & 0.78 & 0.89 & 0.78 & 0.77 & 0.89 & 0.84 & 0.76 \\ \hline
        MLP & 31.0 $\pm$ 0.1 & 35.7 $\pm$ 0.1 & 74.4 $\pm$ 5.6 & 33.6 $\pm$ 1.4 & 49.4 $\pm$ 1.6 & 69.9 $\pm$ 2.9 & 80.4 $\pm$ 3.7 & 35.0 $\pm$ 0.7 \\ \hline
        GPRGNN & 41.1 $\pm$ 1.1 & 45.2 $\pm$ 0.1 & 64.2 $\pm$ 5.3 & 30.4 $\pm$ 2.0 & 39.2 $\pm$ 2.0 & 59.1 $\pm$ 4.4 & 72.8 $\pm$ 4.5 & 30.9 $\pm$ 0.7 \\ \hline
        ChebNet & OOM & 46.3 $\pm$ 0.1 & 74.4 $\pm$ 5.8 & 33.4 $\pm$ 1.0 & 49.4 $\pm$ 1.7 & 69.7 $\pm$ 2.8 & 77.9 $\pm$ 3.0 & 34.8 $\pm$ 1.0 \\ \hline
        ChebNetII & OOM & 40.4 $\pm$ 0.1 & \textcolor{green}{82.1} $\pm$ 5.6 & 37.5 $\pm$ 1.2 & 52.2 $\pm$ 1.5 & \textcolor{green}{72.2} $\pm$ 2.3 & 83.8 $\pm$ 2.2 & \textcolor{blue}{35.1} $\pm$ 1.1 \\ \hline
        APPNP & 40.9 $\pm$ 0.1 & 44.1 $\pm$ 0.6 & 56.3 $\pm$ 3.9 & 27.8 $\pm$ 0.7 & 44.3 $\pm$ 1.5 & 42.9 $\pm$ 3.7 & 52.3 $\pm$ 4.9 & 28.4 $\pm$ 0.6 \\ \hline
        GCNJKNet & OOM & 46.7 $\pm$ 0.2 & 57.8 $\pm$ 2.0 & 26.3 $\pm$ 0.6 & 42.2 $\pm$ 2.1 & 40.9 $\pm$ 4.6 & 49.6 $\pm$ 4.8 & 27.9 $\pm$ 0.8 \\ \hline
        GCN & 46.1 $\pm$ 0.1 & 46.2 $\pm$ 0.2 & 58.7 $\pm$ 2.8 & 27.1 $\pm$ 0.5 & 41.4 $\pm$ 1.7 & 40.3 $\pm$ 3.3 & 49.4 $\pm$ 3.0 & 28.4 $\pm$ 0.7 \\ \hline
        GAT & OOM & 47.1 $\pm$ 0.1 & 58.3 $\pm$ 4.8 & 28.7 $\pm$ 0.9 & 43.9 $\pm$ 1.6 & 46.3 $\pm$ 4.3 & 51.1 $\pm$ 6.2 & 28.9 $\pm$ 0.5 \\ \hline
        GraphSage & 48.9 $\pm$ 0.1 & 48.1 $\pm$ 0.2 & 74.3 $\pm$ 3.7 & 36.2 $\pm$ 0.8 & 46.2 $\pm$ 1.8 & 69.1 $\pm$ 3.5 & 76.9 $\pm$ 4.5 & 34.6 $\pm$ 0.7 \\ \hline
        FAGCN & OOM & 41.7 $\pm$ 3.1 & 65.4 $\pm$ 2.8 & 33.9 $\pm$ 0.8 & 43.2 $\pm$ 0.9 & 55.4 $\pm$ 5.1 & 65.7 $\pm$ 4.3 & 34.4 $\pm$ 0.5 \\ \hline
        $\text{H}_{2}$GCN & OOM & 48.4 $\pm$ 0.6 & 81.1 $\pm$ 5.2 & 32.2 $\pm$ 1.2 & 54.0 $\pm$ 1.1 & 68.5 $\pm$ 2.9 & 78.0 $\pm$ 3.1 & 34.9 $\pm$ 0.4 \\ \hline
        BM-GCN & OOM & OOM & 75.4 $\pm$ 4.8 & 34.2 $\pm$ 0.9 & 53.6 $\pm$ 1.5 & 62.3 $\pm$ 4.3 & 75.4 $\pm$ 5.2 & 34.7 $\pm$ 0.5 \\ \hline
        BernNet & OOM & 38.2 $\pm$ 0.2 & 80.7 $\pm$ 3.3 & \textcolor{blue}{38.7} $\pm$ 0.8 & 50.9 $\pm$ 1.2 & \textcolor{blue}{71.8} $\pm$ 2.8 & \textcolor{green}{84.3} $\pm$ 4.3 & \textcolor{green}{35.4} $\pm$ 0.6 \\ \hline
        \text{G}$^{2}$\text{-GCN} & OOM & 51.2 $\pm$ 0.5 & \textcolor{blue}{81.3} $\pm$ 3.7 & 38.5 $\pm$ 0.7 & 52.9 $\pm$ 1.3 & 71.1 $\pm$ 4.1 & \textcolor{blue}{84.1} $\pm$ 4.4 & 34.3 $\pm$ 0.4 \\ \hline
        GOAT & \textcolor{green}{55.0} $\pm$ 0.2 & \textcolor{blue}{53.5} $\pm$ 0.2 & 62.2 $\pm$ 5.3 & 35.4 $\pm$ 1.2 & 49.9 $\pm$ 2.4 & 52.4 $\pm$ 4.1 & 74.6 $\pm$ 0.4 & 34.2 $\pm$ 0.9 \\ \hline
        NAGphormer & \textcolor{blue}{54.8} $\pm$ 0.1 & 47.8 $\pm$ 0.2 & 69.1 $\pm$ 6.7 & 36.3 $\pm$ 0.9 & 47.6 $\pm$ 1.2 & 63.8 $\pm$ 4.1 & 76.7 $\pm$ 4.0 & 34.8 $\pm$ 0.6 \\ \hline
        PolyFormer & OOM & 45.1 $\pm$ 0.2 & 63.5 $\pm$ 5.4 & \textcolor{green}{41.5} $\pm$ 1.0 & \textcolor{blue}{54.9} $\pm$ 1.7 & 60.8 $\pm$ 3.5 & 75.8 $\pm$ 3.9 & 34.9 $\pm$ 0.3 \\ \hline
        Exphormer & OOM & 47.0 $\pm$ 0.2 & 76.3 $\pm$ 1.1 & 32.9 $\pm$ 0.5 & 46.5 $\pm$ 0.9 & 63.5 $\pm$ 1.5 & 77.2 $\pm$ 1.0 & 34.7 $\pm$ 0.6 \\ \hline
        VCR-Graphormer & OOM & \textcolor{green}{53.7} $\pm$ 1.0 & 64.2 $\pm$ 4.0 & 34.2 $\pm$ 2.0 & \textcolor{green}{56.9} $\pm$ 1.7 & 58.9 $\pm$ 3.5 & 53.1 $\pm$ 9.2 & 34.8 $\pm$ 0.5 \\ \hline
        \midrule
        $\name$(Ours) & \textcolor{red}{64.2} $\pm$ 0.1 & \textcolor{red}{54.6} $\pm$ 0.1 & \textcolor{red}{85.0} $\pm$ 3.5 & \textcolor{red}{47.7} $\pm$ 2.2 & \textcolor{red}{57.3} $\pm$ 1.2 & \textcolor{red}{72.5} $\pm$ 3.8 & \textcolor{red}{85.1} $\pm$ 3.1 & \textcolor{red}{37.8} $\pm$ 0.7 \\ \hline
    \end{tabular}%
    }
    \label{tab:heterophilic_performance}
\end{table*}

\section{Experiment}

\begin{table}[t]
\centering
\footnotesize
\begin{minipage}[t]{0.49\textwidth}
    \centering
    \caption{Patchers on Cora and Chameleon.}
    \label{tab:alternative_patch_functions}
    \begin{tabular}{l|c|c}
    \hline
    \textbf{Patchers} & \textbf{Cora} & \textbf{Chameleon} \\
    \midrule
    Bandpass Filter   & 71.3 $\pm$ 0.8  & 52.1 $\pm$ 1.5 \\
    Heat Filter       & 70.4 $\pm$ 1.7  & 47.7 $\pm$ 1.7 \\
    Shared parameters & 73.0 $\pm$ 1.3  & 52.2 $\pm$ 1.1 \\
    \name\ & \textbf{77.3} $\pm$ 0.6  & \textbf{57.3} $\pm$ 1.2 \\
    \hline
    \end{tabular}
\end{minipage}
\hfill
\begin{minipage}[t]{0.49\textwidth}
    \centering
    \caption{Random vs. ranked patch order.}
    \label{tab:patch_order_importance}
    \begin{tabular}{l|c|c|c}
    \hline
    \textbf{Dataset} & \textbf{Random} & \textbf{Ranked} & \textbf{Change} \\
    \midrule
    Cora       & 71.4 $\pm$ 1.1 & \textbf{77.3} $\pm$ 0.6 & $\uparrow$ \textbf{5.9}\% \\
    Citeseer   & 60.7 $\pm$ 1.4 & \textbf{64.5} $\pm$ 1.8 & $\uparrow$ \textbf{3.8}\% \\
    Squirrel   & 35.4 $\pm$ 1.2 & \textbf{37.8} $\pm$ 1.6 & $\uparrow$ \textbf{2.4}\% \\
    Chameleon  & 51.0 $\pm$ 1.2 & \textbf{57.3} $\pm$ 1.1 & $\uparrow$ \textbf{6.3}\% \\
    \hline
    \end{tabular}
\end{minipage}
\end{table}

\begin{table*}[ht]
\footnotesize
\centering
\label{tab:combined_ablation}
\begin{minipage}{0.45\textwidth}
\caption{Accuracy on patch-induced graphs.}
\label{tab:induced}
\centering
\begin{tabular}{l|cc}
\toprule
\textbf{Model} & \textbf{Cora} & \textbf{Chameleon} \\
\midrule
GCN           & 76.1 $\pm$ 2.3 & 39.9 $\pm$ 2.0 \\
GCN-Patch     & 74.1 $\pm$ 1.0 & 56.0 $\pm$ 2.0 \\
\midrule
FAGCN         & 73.2 $\pm$ 0.7 & 43.2 $\pm$ 0.9 \\
FAGCN-Patch   & 74.8 $\pm$ 0.4 & 48.3 $\pm$ 2.2 \\
\midrule
\name         & \textbf{77.3} $\pm$ 0.6 & \textbf{57.3} $\pm$ 1.2 \\
\bottomrule
\end{tabular}
\end{minipage}
\hspace{0.02\textwidth} 
\begin{minipage}{0.49\textwidth}
\caption{Indivdual component effectiveness in \name.}
\label{tab:component_analysis}
\centering
\begin{tabular}{l|cc}
\toprule
\textbf{Ablation} & \textbf{Arxiv-Year} & \textbf{Snap-Patents} \\
\midrule
\name              & \textbf{54.66} $\pm$ \textbf{0.28} & \textbf{65.05} $\pm$ \textbf{0.04} \\
w/ Patch-Mixing    & 54.49 $\pm$ 0.04 & 64.97 $\pm$ 0.05 \\
w/ Feature-Mixing  & 53.85 $\pm$ 0.20 & 63.00 $\pm$ 0.01 \\
Both Removed       & 52.81 $\pm$ 0.10 & 62.95 $\pm$ 0.09 \\
\bottomrule
\end{tabular}
\end{minipage}
\end{table*}

We empirically evaluate \name\ to validate the key claims made in this work: (1) that adaptive spectral filtering enables robust performance across both homophilic and heterophilic graphs; (2) that our proposed patching and mixing modules contribute significantly to model generalization; and (3) that \name\ offers a scalable, interpretable, and architecture-agnostic foundation for future graph learning models. Due to space constraints, we include additional experiments (scalability, sensitivity, and runtime) in the Appendix~\ref{sec:runtime} and Appendix~\ref{sec:paramsense}.

We evaluate \name\ on 16 node classification benchmarks encompassing diverse graph heterophily degrees. The homophilic graphs include Cora, CiteSeer, PubMed, and OGBN-Arxiv, where neighboring nodes tend to share similar labels. The heterophilic graphs consist of Texas, Squirrel, Chameleon, Cornell, Wisconsin, Actor, Arxiv-Year, and Snap-Patents, which feature connections between dissimilar nodes and present greater challenges for conventional GNNs. Notably, Arxiv-Year and Snap-Patents also serve as large-scale datasets, with up to 3 million nodes and 14 million edges, enabling assessment of the model’s scalability and efficiency in high-volume scenarios.

We compare against classical GNNs (GCN, GAT, GraphSAGE, ChebNet), spectral models (BernNet, GPRGNN), heterophily-aware methods ($\text{H}_{2}$GCN, BM-GCN, $\text{G}^{2}$GCN), and graph transformers (GOAT, NAGphormer, Exphormer, PolyFormer, VCR-Graphormer). Detailed dataset and model descriptions are provided in the Appendix~\ref{app:data}.

\subsection{Overall Performance}
Tables~\ref{tab:homophilic_performance} and~\ref{tab:heterophilic_performance} summarize classification accuracy across all datasets. Three core insights emerge:

\textit{(1)} \name\ \textit{achieves strong performance across both homophilic and heterophilic regimes.}  
On homophilic graphs, \name\ matches or exceeds state-of-the-art performance (e.g., 70.5\% on OGBN-Arxiv). More importantly, \name\ substantially outperforms baselines on heterophilic datasets. For instance, on Squirrel and Snap-Patents, \name\ improves over the next-best model by 9.0\% and 9.2\%, respectively. This validates the theoretical claim that a fixed low-pass or high-pass filter is insufficient for generalization across heterophily levels and underscores the effectiveness of adaptive spectral filtering.

\textit{(2)} \name\ \textit{remains effective on large and noisy graphs.}  
While several baselines fail on large graphs (marked OOM), \name\ maintains strong accuracy (e.g., 64.2\% on Snap-Patents) without sacrificing runtime scalability. This highlights the value of patch-based processing and Fast-\name\ for memory efficiency—an increasingly critical concern in modern GNN deployments.

\textit{(3)} \name\ \textit{bridges gaps between GNNs, spectral methods, and transformers.}  
Transformer models with strong inductive biases underperform on small or irregular graphs (e.g., Texas, Wisconsin). In contrast, \name’s simpler design generalizes well without architectural complexity. Compared to BernNet or GPRGNN, \name\ does not require fine-tuned frequency designs and yet achieves superior results, highlighting its robustness and ease of use.


\subsection{Ablation Studies}
To better understand \name’s internal design, we conduct controlled experiments analyzing the contributions of the Patcher, Mixer, and positional structure.

\textbf{Effectiveness of Patchers.}
We compare \name’s adaptive polynomial patcher against several alternatives: heat filter, bandpass filter, and a shared-parameter variant. As shown in Table~\ref{tab:alternative_patch_functions}, our method consistently outperforms others, especially on heterophilic graphs (e.g., 57.3\% on Chameleon). This highlights the importance of a learnable and flexible filter form, consistent with our theoretical insights from Section~\ref{sec:theory}. While bandpass and heat filters capture fixed spectral bands, they lack the adaptability needed for general graphs.

\textbf{Importance of Patch Order.}
We assess the role of node order within each patch. As shown in Table~\ref{tab:patch_order_importance}, randomly shuffling the patch degrades performance on all datasets, with up to 6.3\% accuracy loss on Chameleon. This demonstrates that patch structure encodes meaningful spectral and positional information, and that the model effectively leverages this structure, an important design signal for future permutation-sensitive architectures.

\textbf{Patch-Induced Graphs.}
To isolate the value of the patcher, we create graphs by linking each node only to its selected patch neighbors, then apply existing GNNs (GCN, FAGCN). Results in Table~\ref{tab:induced} show significant performance gains, especially for GCN on Chameleon (+16\%), validating that \name\ Patcher successfully surfaces informative neighbors even for GNNs not explicitly tuned to heterophily. Conversely, slight degradation on Cora for GCN confirms that fixed low-pass assumptions may conflict with spectrally diverse patches.

\textbf{Component-Level Analysis.}
We remove or replace individual layers in the \name\ Mixer to evaluate their contribution (Table~\ref{tab:component_analysis}). Removing the patch-mixing or feature-mixing layers causes consistent drops in performance across datasets, confirming their complementary roles. Notably, the patch-mixing layer provides the largest standalone gain (e.g., +2.02\% on Snap-Patents), suggesting its importance in modeling structural diversity within patches.

\section{Related Work}
\textbf{Learning on Heterophilic Graphs.}
Traditional GNNs, including GCN~\cite{kipf2017semi}, GAT~\cite{velickovic2018graph}, and APPNP~\cite{DBLP:conf/iclr/KlicperaBG19}, are grounded in the homophily assumption, neighboring nodes tend to share similar labels. However, real-world networks often violate this assumption, exhibiting \textit{heterophily} where connected nodes belong to different classes~\cite{zhu2020beyond, luan2024heterophilic}. This has motivated a range of models that enhance GNN performance under heterophily. $\text{H}_{2}$GCN~\cite{zhu2020beyond} and BM-GCN~\cite{he2022block} extend message passing to multi-hop neighborhoods or learn block-level compatibility. Other methods like FAGCN~\cite{fagcn2021} and BernNet~\cite{he2021bernnet} leverage spectral insights, using learnable filters to balance low- and high-frequency information. ChebNetII~\cite{he2022chebnetii} revisits Chebyshev polynomials for deeper spectral modeling, while $\text{G}^{2}$GCN~\cite{rusch2022gradient} introduces gradient gating mechanisms. These works highlight the importance of high-frequency components in heterophilic settings but often rely on heuristic assumptions or task-specific filter tuning.

\textbf{Spectral GNNs and Frequency-Adaptive Filters.}
A complementary line of work focuses on understanding GNNs from the spectral perspective. The view of GNNs as graph filters in the spectral domain, applying transformations $g(\mathbf{\Lambda})$ via eigen decomposition of the Laplacian, has been formalized in studies such as~\cite{NT2019RevisitingGN, chien2021adaptive}. Spectral GNNs such as ChebNet~\cite{tang2019chebnet} and GPRGNN~\cite{chien2021adaptive} design polynomial filters to capture information across the graph spectrum. While these methods enhance theoretical interpretability and offer flexibility, most assume a global, fixed filter shape (e.g., low-pass or band-pass), which limits adaptivity. As recent theoretical works note~\cite{xu2024revisiting, gao2023addressing}, the relationship between graph heterophily and optimal spectral response is non-monotonic and dataset-dependent, suggesting the need for instance-level adaptivity in filter design.

\textbf{Graph Transformers and Global Attention.}
Transformer architectures have been adapted to graphs to address the limitations of locality in message passing. Approaches such as NAGphormer~\cite{chennagphormer}, GOAT~\cite{kong2023goat}, Exphormer~\cite{shirzad2023exphormer}, VCR-Graphormer~\cite{fu2024vcr}, and PolyFormer~\cite{liu2023polyformer} integrate global attention mechanisms, often augmented with positional encodings or spectral bias terms. While effective on large-scale and heterophilic graphs, these models tend to be computationally expensive and architecturally complex. Moreover, their reliance on learned positional encodings can make generalization brittle, especially on smaller graphs or graphs with evolving structure.

Our proposed \name\ bridges the gap between the spectral flexibility of filter-based GNNs and the architectural simplicity of transformer-free encoders. Instead of hard-code frequency biases or depending on hand-tuned spectral forms, \name\ introduces an adaptive polynomial filter that dynamically aligns its frequency response with the graph's heterophily structure.
Theoretical results provide the first formal link between graph heterophily, spectral filter response, and generalization error. This connection not only strengthens empirical observations from prior work~\cite{luan2024heterophilic, he2021bernnet}, but also introduces a framework for future frequency-aware and architecture-agnostic GNN design.

\section{Conclusion}
In this paper, we make three key contributions: (1) a theoretical analysis that, for the first time, formally connects graph heterophily, spectral filter response, and generalization error—challenging the prevailing assumption of monotonic filter-heterophily correlation; (2) a modular architecture that integrates adaptive polynomial filters with a lightweight MLP-Mixer backbone, enabling interpretable and efficient spectral reasoning across diverse graph structures; and (3) extensive empirical validation across 16 benchmark datasets, where \name\ consistently outperforms state-of-the-art GNNs and graph transformers on both homophilic and heterophilic graphs, including large-scale settings.
\section*{Acknowledgements}
 We thank the anonymous reviewers for their constructive comments. This work is supported by the MIT-IBM Watson AI Lab,  National Science Foundation under Award No. IIS-2339989 and No. 2406439, DARPA under contract No. HR00112490370 and No. HR001124S0013, U.S. Department of Homeland Security under Grant Award No. 17STCIN00001-08-00,  Amazon-Virginia Tech Initiative for Efficient and Robust Machine Learning, Amazon AGI Team, Amazon AWS, Google, Cisco, 4-VA, Commonwealth Cyber Initiative, National Surface Transportation Safety Center for Excellence, UIUC AICE Center, and Virginia Tech. The views and conclusions are those of the authors and should not be interpreted as representing the official policies of the funding agencies or the government. We thank Professor Julian Shun (MIT) for his insightful discussions and feedback that improved this manuscript.
\bibliographystyle{unsrt} 
\bibliography{neurips2025}

\section*{NeurIPS Paper Checklist}

\begin{enumerate}

\item {\bf Claims}
    \item[] Question: Do the main claims made in the abstract and introduction accurately reflect the paper's contributions and scope?
    \item[] Answer: \answerYes{}
    \item[] Justification: The abstract and introduction clearly state the core contributions of HEROFILTER, including its theoretical analysis, adaptive filter design, and experimental improvements. These claims are supported throughout the paper with theoretical bounds, algorithmic innovation, and extensive empirical evaluation.

\item {\bf Limitations}
    \item[] Question: Does the paper discuss the limitations of the work performed by the authors?
    \item[] Answer: \answerYes{}
    \item[] Justification: The paper discusses the limitation on computation efficiency which they alleviate this limitation by introducing an efficient model variant.

\item {\bf Theory assumptions and proofs}
    \item[] Question: For each theoretical result, does the paper provide the full set of assumptions and a complete (and correct) proof?
    \item[] Answer: \answerYes{}
    \item[] Justification: The paper provides full assumptions and proofs in Appendix~\ref{sec:proof} for Proposition~\ref{property:filter}, Proposition~\ref{property:align}, and Theorem~\ref{theorem:bound}, which establish the theoretical grounding for the proposed adaptive filtering mechanism.

\item {\bf Experimental result reproducibility}
    \item[] Question: Does the paper fully disclose all the information needed to reproduce the main experimental results of the paper to the extent that it affects the main claims and/or conclusions of the paper (regardless of whether the code and data are provided or not)?
    \item[] Answer: \answerYes{}
    \item[] Justification: The paper details datasets, baselines, metrics, and experimental setups, and refers to the appendices for parameter settings and runtime details, making reproduction feasible.

\item {\bf Open access to data and code}
    \item[] Question: Does the paper provide open access to the data and code, with sufficient instructions to faithfully reproduce the main experimental results, as described in supplemental material?
    \item[] Answer: \answerYes{}
    \item[] Justification: The paper provides an anonymous open repository link (https://anonymous.4open.science/r/HeroFilter-221C/) that includes code and datasets for reproducing results.

\item {\bf Experimental setting/details}
    \item[] Question: Does the paper specify all the training and test details (e.g., data splits, hyperparameters, how they were chosen, type of optimizer, etc.) necessary to understand the results?
    \item[] Answer: \answerYes{}
    \item[] Justification: Detailed descriptions of training settings, data splits, patch sizes, optimizers, and hyperparameter tuning procedures are included in the appendices.

\item {\bf Experiment statistical significance}
    \item[] Question: Does the paper report error bars suitably and correctly defined or other appropriate information about the statistical significance of the experiments?
    \item[] Answer: \answerYes{}
    \item[] Justification: All reported results include standard deviations (e.g., 86.8 ± 1.5), indicating that experiments were repeated with multiple random seeds.

\item {\bf Experiments compute resources}
    \item[] Question: For each experiment, does the paper provide sufficient information on the computer resources (type of compute workers, memory, time of execution) needed to reproduce the experiments?
    \item[] Answer: \answerYes{}
    \item[] Justification: The runtime and scalability analysis are included in Appendix~\ref{sec:runtime}. Fast-HEROFILTER is proposed specifically for scalable deployment, and compute efficiency is discussed.

\item {\bf Code of ethics}
    \item[] Question: Does the research conducted in the paper conform, in every respect, with the NeurIPS Code of Ethics \url{https://neurips.cc/public/EthicsGuidelines}?
    \item[] Answer: \answerYes{}
    \item[] Justification: The research uses only public benchmark datasets, poses no human or environmental risk, and follows ethical machine learning practices.

\item {\bf Broader impacts}
    \item[] Question: Does the paper discuss both potential positive societal impacts and negative societal impacts of the work performed?
    \item[] Answer: \answerNA{}
    \item[] Justification: The impact statement notes that the work is intended to advance machine learning but does not specifically highlight societal impacts.

\item {\bf Safeguards}
    \item[] Question: Does the paper describe safeguards that have been put in place for responsible release of data or models that have a high risk for misuse (e.g., pretrained language models, image generators, or scraped datasets)?
    \item[] Answer: \answerNA{}
    \item[] Justification: The work does not involve any high-risk models or data types.

\item {\bf Licenses for existing assets}
    \item[] Question: Are the creators or original owners of assets (e.g., code, data, models), used in the paper, properly credited and are the license and terms of use explicitly mentioned and properly respected?
    \item[] Answer: \answerYes{}
    \item[] Justification: All datasets used are publicly available and either CC BY 4.0 or MIT-licensed. Proper citations are provided.

\item {\bf New assets}
    \item[] Question: Are new assets introduced in the paper well documented and is the documentation provided alongside the assets?
    \item[] Answer: \answerNA{}
    \item[] Justification: No new datasets or models requiring separate documentation are introduced beyond what is presented in the paper and repository.

\item {\bf Crowdsourcing and research with human subjects}
    \item[] Question: For crowdsourcing experiments and research with human subjects, does the paper include the full text of instructions given to participants and screenshots, if applicable, as well as details about compensation (if any)? 
    \item[] Answer: \answerNA{}
    \item[] Justification: The paper does not involve any crowdsourcing or human subject research.

\item {\bf Institutional review board (IRB) approvals or equivalent for research with human subjects}
    \item[] Question: Does the paper describe potential risks incurred by study participants, whether such risks were disclosed to the subjects, and whether Institutional Review Board (IRB) approvals (or an equivalent approval/review based on the requirements of your country or institution) were obtained?
    \item[] Answer: \answerNA{}
    \item[] Justification: The research does not involve human participants or require IRB approval.

\item {\bf Declaration of LLM usage}
    \item[] Question: Does the paper describe the usage of LLMs if it is an important, original, or non-standard component of the core methods in this research?
    \item[] Answer: \answerNA{}
    \item[] Justification: Large Language Models (LLMs) were used only for writing assistance (e.g., editing and formatting) and had no role in the technical contributions of the research.

\end{enumerate}
\newpage
\appendix
\onecolumn

\section{Appendix Content}
\begin{itemize}
\item Appendix~\ref{sec:notation}: Key symbols and notations in this paper.
\item Appendix~\ref{sec:proof}: Detailed proofs of Proposition~\ref{property:filter}, Proposition~\ref{property:align}, and Theorem~\ref{theorem:bound}.
\item Appendix~\ref{app:algo}: Algorithm descriptions including \name~Framework and Fast-\name~Framework.
\item Appendix~\ref{app:data}: Dataset statistics, including node count, edge count, number of classes, feature dimensions, and heterophily scores for both homophilic and heterophilic graphs. Details of baseline models used for comparison and implementation details.
\item Appendix~\ref{sec:paramset}: Parameter settings and tuning procedures, including hyperparameter search spaces.
\item Appendix~\ref{sec:runtime}: Scalability and runtime analysis of Fast-\name~and baseline models showing the inference speed of each model.
\item Appendix~\ref{sec:paramsense}: Parameter sensitivity studies showing the impact of patch size and filter numbers.
\end{itemize}

\section{Symbols and Notations}
\label{sec:notation}
\begin{table}[htbp]
\caption{Symbols and notations.}
\centering
\scalebox{1.0}{
\begin{tabular}{cl}
\hline 
Symbol & Description \\
\hline
\( \mathcal{G} \) & Graph \\
\( \mathcal{V} \) & Set of vertices (nodes) of the graph \( \mathcal{G} \) \\
\( \mathcal{E} \) & Set of edges of the graph \( \mathcal{G} \) \\
\( \mathbf{X} \) & Node feature matrix of $\mathcal{G}$.\\
\( n \) & Number of nodes in the graph \( \mathcal{G} \) \\
\( \mathbf{L}, \mathbf{A} \) & Laplacian matrix and adjacency matrix of \( \mathcal{G} \) \\
\( \tilde{\mathbf{L}}, \tilde{\mathbf{A}} \) & Normalized Laplacian matrix and adjacency matrix of \( \mathcal{G} \) \\
\( \mathcal{N}(v) \) & Set of neighbors of node \( v \) \\
\( g(\mathbf{\Lambda}) \) & A spectral graph filter on eigenvalue matrix \( \mathbf{\Lambda} \) \\
\hline
\end{tabular}
}
\label{tab:notation}
\end{table}

\section{Proofs of the Theorems}\label{sec:proof}
\propertyfilter*
\begin{proof}
According to the weighted AM-GM inequality \cite{hoffman1981packing}, we have
\begin{equation}
\begin{aligned}
&\frac{g(\lambda_0)}{\sum_{0\leq i\leq n-1} g(\lambda_i)}|\hat{h}_0|+\cdots +\frac{g(\lambda_{n-1})}{\sum_{0\leq i\leq n-1} g(\lambda_i)}|\hat{h}_{n-1}| \\
\geq & |\hat{h}_0|^{\frac{g(\lambda_0)}{\sum_{0\leq i\leq n-1}g(\lambda_i)}} \cdot\cdots \cdot |\hat{h}_{n-1}|^{\frac{g(\lambda_{n-1})}{\sum_{0\leq i\leq n-1}g(\lambda_i)}}.
\end{aligned}
\end{equation}
As we have valid polynomial filer $g(\lambda_i) \in [0, 1]$ for all $i$~\cite{NEURIPS2021_76f1cfd7}, then
\begin{equation}
\begin{aligned}
& \frac{\sum_{0\leq i\leq n-1} g(\lambda_i) |\hat{h}_i|}{\sum_{0\leq i\leq n-1} g(\lambda_i)} \geq |\hat{h}_0|^{\frac{1}{\sum_{0\leq i\leq n-1}g(\lambda_i)}} \cdot\cdots \cdot |\hat{h}_{n-1}|^{\frac{1}{\sum_{0\leq i\leq n-1}g(\lambda_i)}} \\
= & \prod_{0\leq i\leq n-1} |\hat{h}_i|^{\frac{1}{\sum_{0\leq k\leq n-1}g(\lambda_k)}}.
\end{aligned}
\end{equation}
Since both sides of the inequality are greater than 0, we take the logarithm
\begin{equation}
\begin{aligned}
& \log \left(\frac{\sum_{0\leq i\leq n-1} g(\lambda_i) |\hat{h}_i|}{\sum_{0\leq i\leq n-1} g(\lambda_i)}\right) \geq \frac{1}{\sum_{0\leq k\leq n-1}g(\lambda_k)} \sum_{0\leq i\leq n-1} \log |\hat{h}_i|.
\end{aligned}
\end{equation}
Hence, it completes the proof.
\end{proof}

\propertyalign*
\begin{proof}
To match $g(\mathbf{\Lambda})$ with $\mathbf{Y}$, we require $g(\mathbf{\Lambda}) = c\mathbf{Y}$, where $c \neq 0$ is a scalar. This requires:
\begin{equation}
    \sum_{k=1}^K \sigma\left(w_{k,i} \cdot \lambda_i^k\right) = \alpha y_i \quad \forall i \in \{1, 2, \ldots, n\}.
\end{equation}

If $\mathbf{Y}_i=0$, then we need $g_i(\mathbf{\Lambda})=0$, $\forall i \in \{1, \dots, n\}$. Let $w_{k,i}=0$ for all $k$, since $\sigma(0)=0$, the conclusion holds.

If $\mathbf{Y}_i>0$. Case 1: $\sigma(x)$ is bounded. Suppose $\sigma(x) \in [0, M]$ for some $M > 0$. Since $\mathbf{Y}_i \in [0, C-1]$, we have $\frac{c\mathbf{Y}_i}{K}\in[0, M]$ by choosing $c$ large enough. Since
$\sigma$ is continuous and monotonic, we can always find $w_{k,i}$ to satisfy $\sigma(w_{k,i} (\Lambda_i)^k) = \frac{c\mathbf{Y}_i}{K}$.

Case 2: $\sigma(x)$ is unbounded. In this case, no scaling constraints are required. The left term can be arbitrarily large if $w_{k,i}$ is chosen large enough. So we can find $w_{k,i}$, s.t. $\sigma(w_{k,i} (\Lambda_i)^k) = \frac{c\mathbf{Y}_i}{K}$.

Thus, we have 
\begin{equation}
    g_i(\mathbf{\Lambda}) = \sum_{k=1}^K \sigma(w_{k,i} (\Lambda_i)^k) = c\mathbf{Y}_i,
\end{equation}
for all $i \in \{1, \dots, n\}$. Then we have
\begin{equation}
    g(\mathbf{\Lambda}) = c\mathbf{Y}.
\end{equation}
The cosine similarity is
\begin{equation}
    \cos(g(\mathbf{\Lambda}), \mathbf{Y}) = \frac{g(\mathbf{\Lambda}) \cdot \mathbf{Y}}{\|g(\mathbf{\Lambda})\| \|\mathbf{Y}\|} = 1.
\end{equation}
Hence, it completes the proof.
\end{proof}

\theorembound*
\begin{proof}
Let $\psi$ be the clamp function defined as
\begin{equation}
    \psi(x) \triangleq \min \{\max \{x,-1\}, 1\}=\left\{\begin{array}{ll} 1 & x>1 \\
    x & -1<x<1 \\
    -1 & x<-1 \end{array}\right. ,
\end{equation}
\begin{equation}
\begin{aligned}
    d(x, \psi(x)) &\triangleq \left(\frac{1}{1+e^{x}}-y\right)^{2}-\left(\frac{1}{1+e^{\psi(x)}}-y\right)^{2} \\
    & \in\left\{\begin{array}{ll} {[-(\frac{1}{1+e})^{2}, 0]} & x>1, y=0 \\
    {[0,(\frac{1}{1+c})^{2}]} & x>1, y=1 \\
    {[0,(\frac{1}{1+e})^{2}]} & x<-1, y=0 \\
    {[-(\frac{1}{1+e})^{2}, 0]} & x<-1, y=1 \end{array}\right. \\ & \leq \frac{1}{(1+e)^2},
\end{aligned}
\end{equation}
and for $x\in [-1, 1]$, the first-order Taylor expansion of $\frac{1}{1+e^x}$ is $\frac{1}{2}-\frac{1}{4}x$. Denote $R(x)$ as the remainder term, that is, $R(x)=\frac{1}{1+e^x}-\frac{1}{2}+\frac{1}{4}x$. since
\begin{equation}
\begin{aligned}
    \left(\frac{1}{\left(1+e^{x}\right)^{2}}\right)^{\prime \prime \prime} &=-\frac{e^{x}\left(-4 e^{x}+e^{2 x}+1\right)}{\left(1+e^{x}\right)^{4}} \leq\left.\left(\frac{1}{\left(1+e^{x}\right)^{2}}\right)^{\prime \prime \prime}\right|_{x=0}=\frac{1}{8},
\end{aligned}
\end{equation}
we have
\begin{equation}
    |R(x)| \leq \max \left|\left(\frac{1}{\left(1+e^{x}\right)^{2}}\right)^{\prime \prime \prime}\right| \frac{|x|^{3}}{3 !}=\frac{|x|^{3}}{48}.
\end{equation}
Therefore,
\begin{equation}
\begin{aligned}
    & \left(\frac{1}{1+e^{x}}-y\right)^{2}=\left(\frac{1}{1+e^{\psi(x)}}-y\right)^{2}+d(x, \psi(x)) \\
    = & \left(\frac{1}{2}-\frac{1}{4} \psi(x)-y+R(\psi(x))\right)^{2}+\frac{1}{(1+e)^{2}} \\
    = & \left(\frac{1}{2}-\frac{1}{4} \psi(x)-y\right)^{2} + \left(R(\psi(x))\right)^2 + 2R(\psi(x))\left(\frac{1}{2}-\frac{1}{4} \psi(x)-y\right) + \frac{1}{(1+e)^{2}} \\
    = & \left(\frac{1}{2}-\frac{1}{4} \psi(x)-y\right)^{2} + \left(\frac{|\psi(x)|^{3}}{48}\right)^2 + \frac{|\psi(x)|^{3}}{24}\left|\frac{1}{2}-\frac{1}{4} \psi(x)-y\right| + \frac{1}{(1+e)^{2}} \\
    \leq & \left(\frac{1}{2}-y\right)^{2} - \frac{(1-2y)\psi(x)}{4} + \frac{\psi(x)^2}{16} + \frac{|\psi(x)|^{6}}{2304} + \frac{|\psi(x)|^{3}}{24}\left(\frac{1}{4}|\psi(x)|+\frac{1}{2}\right) + \frac{1}{(1+e)^{2}} \\
    \leq & \frac{1}{4}-\frac{(1-2 y) \psi(x)}{4}+\frac{\psi(x)^{2}}{16}+\frac{|\psi(x)|^{3}}{48}+\frac{\psi(x)^{4}}{96} + \frac{|\psi(x)|^{6}}{2304}+\frac{1}{(1+e)^{2}}.
\end{aligned}
\end{equation}
According to the above conclusion,
\begin{equation}
\begin{aligned}
    & Er\left(\mathbf{X}_0, \mathbf{y}_0\right) = \sum_l\left(\frac{1}{1+e^{g(I-\tilde{\mathbf{A}})\left(\mathbf{x}_{1 l}-\mathbf{x}_{0 l}\right)}}-\mathbf{y}_{0 l}\right)^2 \\
    \leq & \frac{n}{4}-\frac{1}{4}\left(\mathbf{y}_1-\mathbf{y}_0\right)^{\top} \psi(\mathbf{z}) + \frac{\|\psi(\mathbf{z})\|_2^2}{16}+\frac{\|\psi(\mathbf{z})\|_3^3}{48} + \frac{\|\psi(\mathbf{z})\|_4^4}{96}+\frac{\|\psi(\mathbf{z})\|_6^6}{2304}+\frac{n}{(1+e)^2},
\end{aligned}
\end{equation}
$\mathbf{z}=g(I-\tilde{\mathbf{A}})(\mathbf{X}_1-\mathbf{X}_0)_l$ , noting that $C \leq\|\psi(\mathbf{z})\|_6^6 \leq\|\psi(\mathbf{z})\|_4^4 \leq\|\psi(\mathbf{z})\|_3^3 \leq\|\psi(\mathbf{z})\|_2^2 \leq n$, then we have
\begin{equation}\label{equ:signalError}
\begin{aligned}
    & Er\left(\mathbf{X}_0, \mathbf{y}_0\right) \\ \leq & \frac{n}{4}-\frac{1}{4}\left(\mathbf{y}_1-\mathbf{y}_0\right)^{\top} \psi(\mathbf{z})+\frac{217}{2304}\|\psi(\mathbf{z})\|_2^2+\frac{n}{(1+e)^2} \\
    = & c_{1} n-\frac{1}{4} \sum_l \psi\left((\mathbf{y}_{1 l}-\mathbf{y}_{0 l})(g(I-\tilde{\mathbf{A}})(\mathbf{X}_1-\mathbf{X}_0)_l)\right),
\end{aligned}
\end{equation}
where $c_1$ is a constant.  For any $\eta$, we construct $\tilde{\eta}_i=\psi_{\frac{1}{g\left(1-\lambda_i\right) \delta_i}}\left(\eta_i\right)$ such that $\left|\tilde{\eta}_i g\left(1-\lambda_i\right) \delta_i\right| \leq 1$ and $\sum_{i \in \mathcal{I}_{g, \delta, \eta}} \psi\left(\eta_i g\left(1-\lambda_i\right) \delta_i\right)=\sum_{i \in \mathcal{I}_{g, \delta, \delta, \tilde{\eta}}} \tilde{\eta}_i g\left(1-\lambda_i\right) \delta_i$. We define $m_g \triangleq \min_{i \in \mathcal{I}_{g, \delta, \eta}}\tilde{\eta}_i\delta_i$. From the proof of Proposition~\ref{property:filter}, for any $g(\cdot)$ and $\delta$, we have
\begin{equation}\label{equ:filterBound}
\begin{aligned}
    & \sum_{i=0}^{n-1} \psi\left(\eta_i g\left(1-\lambda_i\right) \delta_i\right)=\sum_{i \in \mathcal{I}_{g, \delta, \eta}} \psi\left(\eta_i g\left(1-\lambda_i\right) \delta_i\right) \\ 
    = & \sum_{i \in \mathcal{I}_{g, \delta, \tilde{\eta}}} \tilde{\eta}_i g\left(1-\lambda_i\right) \delta_i
    \geq m_g \sum_{i \in \mathcal{I}_{g, \delta, \tilde{\eta}}} g\left(1-\lambda_i\right) \\
    = & m_g \left(\sum_{i \in \mathcal{I}_{\delta, \tilde{\eta}}} g\left(1-\lambda_i\right)+\sum_{i \in \mathcal{I}_{g}} g\left(1-\lambda_i\right)-\sum_{i=0}^{n-1} g\left(1-\lambda_i\right) \right) \\
    = & m_g \sum_{i \in \mathcal{I}_{\delta, \tilde{\eta}}} g\left(1-\lambda_i\right) \\
    \geq & m_g \frac{\sum_{i \in \mathcal{I}_{\delta, \tilde{\eta}}} \log |\hat{h}_i|}{\log \sum_{i \in \mathcal{I}_{\delta, \tilde{\eta}}} g\left(1-\lambda_i\right) |\hat{h}_i|-\log \sum_{i \in \mathcal{I}_{\delta, \tilde{\eta}}} g\left(1-\lambda_i\right)}.
\end{aligned}
\end{equation}
According to Eq.~\ref{equ:signalError} and Eq.~\ref{equ:filterBound}, we have
\begin{equation}
\begin{aligned}
    & \overline{Er}(\mathbf{X}, \mathbf{Y})= \frac{1}{n}\|\sigma(g(I-\tilde{\mathbf{A}}) \mathbf{X})-\mathbf{Y}\|_{F}^{2} \\ = & \frac{2}{n} Er\left(\mathbf{X}_{0}, \mathbf{y}_{0}\right)
    \leq c_1 - \frac{\min_{i \in \mathcal{I}_{g, \delta, \eta}}\psi_{\frac{1}{g\left(1-\lambda_i\right) \delta_i}}\left(\eta_i\right) \cdot \delta_i \sum_{i \in \mathcal{I}_{\delta, \tilde{\eta}}} \log |\hat{h}_i|}{2n\log \sum_{i \in \mathcal{I}_{\delta, \tilde{\eta}}} g\left(1-\lambda_i\right) |\hat{h}_i| - 2n\log \sum_{i \in \mathcal{I}_{\delta, \tilde{\eta}}} g\left(1-\lambda_i\right)}.
\end{aligned}   
\end{equation}
Hence, it completes the proof.
\end{proof}

\section{Algorithm}\label{app:algo}
\begin{algorithm}[htbp]
\caption{\name\ Framework}
\label{algo:gpatcher-framework}
\begin{algorithmic}[1]
\Require source graph $\mathcal{G}(\mathcal{V}, \mathcal{E}, \mathbf{X})$; adjacency matrix $\mathbf{A}$; patch size $p$; layer number $m$
\Ensure $\mathcal{Y}$

\State \textbf{Preprocessing:}

\State $\tilde{ \mathbf{A}} \gets \text{Normalization}(\mathbf{A})$
\State $\Lambda, \mathbf{U} \gets \text{EigenDecomposition}(\tilde{\textbf{A}})$

\State \textbf{\name\ Patcher $\phi$:}

\State $\mathbf{R} \gets \mathbf{U}g(\Lambda)\mathbf{U}^T$ \Comment{Compute the patcher score matrix}
\State $\mathbf{P} \gets \mathbf{X}[\text{top-}p_{col}(\mathbf{R}, p)]$ \Comment{Extract top-$p$ nodes columnwise from patcher score matrix}
\State $\mathbf{P}^0 \gets \mathbf{P}$\Comment{$\mathbf{P} \in \mathbb{R}^{n \times p \times d}$}

\State \textbf{\name\ Mixer:}
\For{$l \gets 1$ to $m$}
\State $\hat{\mathbf{P}}^{l} \gets \sigma(\text{MLP}(\text{LayerNorm}(\mathbf{P}^{l-1}), \boldsymbol{\theta}^i_{\text{patch}}))$ \Comment{Mixing along patch dimension}
\State $\tilde{\mathbf{P}}^{l} \gets \sigma(\text{MLP}(\text{LayerNorm}(\hat{\mathbf{P}}^{l}), \boldsymbol{\theta}^l_{\text{feature}}))$ \Comment{Mixing along feature dimension}
\EndFor
\State $\bar{\tilde{\mathbf{P}}} \gets \text{Aggregate}(\mathbf{P}^m)$
\State $\mathcal{Y} \gets \text{MLP}(\bar{\mathbf{P}}, \boldsymbol{\theta}_{\text{predict}})$\Comment{Aggregate along patch dimension via summation function and predict node label.}
\State 
\Return $\mathcal{Y}$
\end{algorithmic}
\end{algorithm}

In the standard \name\ framework, we generate patches denoted by $\mathbf{P}=\{\mathbf{P}_v: v\in\mathcal{V}\}$ for all nodes using the patcher $\phi$. The patches $\mathbf{P}$ undergo multiple layers of \name\ Mixer operations to yield $\mathbf{P}^m$, which is subsequently employed for node label prediction to obtain $\mathcal{Y}$. Initially, we preprocess the data by normalizing matrix $\mathbf{A}$, followed by the extraction of $\mathbf{\Lambda}$ and $\mathbf{U}$ from the modified adjacency matrix $\tilde{\mathbf{A}}$. Applying the patcher $\phi$ on $\mathbf{X}$ to our graph, we obtain patches of size $[n\times p\times d]$, where $n$ denotes the number of nodes, $p$ the patch size, and $d$ the dimension of the input node features. Each patch, represented by a $[p\times d]$ matrix, encapsulates the extracted features from a node and its top $p$ neighboring nodes. These patches are then processed by a patch-mixing layer, which mixes information along the patch dimension $p$ by transposing its feature dimension with its patch dimension and passing it through an MLP layer. This process is followed by Layer Normalization and an arbitrary activation function, after which another transpose operation is conducted between the feature and patch dimensions, resulting in a tensor of size $[n\times p\times d]$. This tensor is further processed by a feature-mixing layer, utilizing an MLP to operate on the $d$-dimension. By passing $\mathbf{P}$ through $m$ layers, the final $\mathbf{P}^m$ is constructed. Finally, we aggregate along the patch dimension $p$ of $\mathbf{P}^m$ using an arbitrary aggregation function, yielding a final representation of size $[n\times d]$. An additional MLP is applied to this final representation for classifying the label of each node, resulting in a matrix of size $[n\times |\mathcal{Y}|]$.

\begin{algorithm}[ht]
\caption{Fast-\name{} Framework}
\label{alg:gpatcher-framework}
\begin{algorithmic}[1]
\Require Source graph $\mathcal{G}(\mathcal{V}, \mathcal{E}, \mathbf{X})$; adjacency matrix $\mathbf{A}$; patch size $p$; layer number $m$; dangling scalar $c$
\Ensure Predicted labels $\mathcal{Y}$
\State \textbf{Preprocessing:}
\State $\tilde{\mathbf{A}} \gets \text{Normalization}(\mathbf{A})$
\State \textbf{\name{} Patcher} $\phi_{\text{fast}}$:
\For{$v$ in $|\mathcal{V}|$}
    \State $\mathbf{r}_v \gets \text{Minimize}~\mathcal{J}(\mathbf{r}_v)$ given $\mathbf{A}, c$ \Comment{Eq.~\ref{eq:ppr}}
\EndFor
\State $\mathbf{R} \gets \text{Stack}(\mathbf{r}_v)$
\State $\mathbf{P} \gets \mathbf{X}[\text{top-}p_{\text{col}}(\mathbf{R}, p)]$ \Comment{Extract top-$p$ nodes column-wise}
\State $\mathbf{P}^0 \gets \mathbf{P}$ \Comment{$\mathbf{P} \in \mathbb{R}^{n \times p \times d}$}
\State \textbf{\name{} Mixer}:
\For{$l = 1$ to $m$}
    \State $\hat{\mathbf{P}}^{l} \gets \sigma(\text{MLP}(\text{LayerNorm}(\mathbf{P}^{l-1}), \boldsymbol{\theta}^{l}_{\text{patch}}))$ \Comment{Mixing patches}
    \State $\tilde{\mathbf{P}}^{l} \gets \sigma(\text{MLP}(\text{LayerNorm}(\hat{\mathbf{P}}^{l}), \boldsymbol{\theta}^{l}_{\text{feature}}))$ \Comment{Mixing features}
\EndFor
\State $\bar{\mathbf{P}} \gets \text{Aggregate}(\tilde{\mathbf{P}}^m)$
\State $\mathcal{Y} \gets \text{MLP}(\bar{\mathbf{P}}, \boldsymbol{\theta}_{\text{predict}})$ \Comment{Predict node label}
\State
\Return $\mathcal{Y}$
\end{algorithmic}
\end{algorithm}

In the Fast-\name\ framework, we generate patches $\mathbf{P}=\{\mathbf{P}_v: v\in\mathcal{V}\}$ for all nodes using patcher $\phi_{\text{fast}}$, mix $\mathbf{P}$ using multiple layers of patch-mixing operations to obtain $\mathbf{P}^m$, and predict node labels $\mathcal{Y}$ using $\bar{\mathbf{P}}^m$. The patcher process involves minimizing an objective function, as described in Eq.~\ref{eq:ppr}, to obtain the patcher scores $\mathbf{r}_v$ for each node $v$. These scores are then stacked to form the patcher score matrix $\mathbf{R}$. The dangling scalar $c$ controls how the patcher adapts to different degrees of heterophily during score computation. When $c$ is lower, the focus of the patcher score matrix $\mathbf{R}$ focuses on relationships between nodes with similar spectral properties. On the other hand, when $c$ is higher, the patcher score matrix $\mathbf{R}$ captures relationships across nodes with varying spectral characteristics, allowing the model to adapt to more diverse heterophily patterns. The choice of $c$ enables the framework to flexibly adjust its spectral response based on the underlying heterophily structure of different graphs. The rest of the framework remains the same as the standard \name\ framework.

\section{Experiment Setup}\label{app:data}

\begin{table*}[ht]
\centering
\caption{Dataset Statistics of homophilic and heterophilic graph datasets. The columns show the number of nodes ($\mathcal{|V|}$), the number of edges ($\mathcal{|E|}$), the number of unique classes ($|\mathcal{Y}|$), the feature dimension ($|\mathbf{X}|$), and the average heterophily score ($\frac{\sum{h_i}}{n}$) indicating the degree of heterophily within each dataset.}
\small
\begin{tabular}{cc|c|c|c|c|c}
\hline
\multicolumn{2}{c|}{\textbf{Dataset}} & \textbf{$\mathcal{|V|}$} & \textbf{$\mathcal{|E|}$} & \textbf{$|\mathcal{Y}|$} & $|\mathbf{X}|$ & $\frac{\sum{h_i}}{n}$ \\ 
\hline
\multicolumn{1}{c|}{\multirow{4}{*}{\textbf{$\frac{\sum{h_i}}{n} \leq$  0.5}}} & \textbf{Cora} & 2708 & 5278 & 7 & 1433 & 0.19 \\ 
\multicolumn{1}{c|}{} & \textbf{Citeseer} & 3327 & 4676 & 6 & 3703 & 0.26 \\ 
\multicolumn{1}{c|}{} & \textbf{Pubmed} & 19717 & 44327 & 3 & 500 & 0.2 \\ 
\multicolumn{1}{c|}{} & \textbf{OGBN-Arxiv} & 169343 & 1116243 & 40 & 400 & 0.34 \\ 
\hline
\multicolumn{1}{c|}{\multirow{9}{*}
{\textbf{$\frac{\sum{h_i}}{n}$ > 0.5}}} 
& \textbf{Snap-Patents} & 2,923,922 & 13,975,788 & 5 & 269 & 0.93\\
\multicolumn{1}{c|}{}& \textbf{Arxiv-Year} & 169,343 & 1,166,243 & 5 & 128 & 0.78\\
\multicolumn{1}{c|}{} & \textbf{Texas} & 183 & 295 & 5 & 1703 & 0.89 \\ 
\multicolumn{1}{c|}{} & \textbf{Squirrel} & 5201 & 198493 & 5 & 2089 & 0.78 \\ 
\multicolumn{1}{c|}{} & \textbf{Chameleon} & 2277 & 31421 & 5 & 2325 & 0.77 \\ 
\multicolumn{1}{c|}{} & \textbf{Cornell} & 183 & 295 & 5 & 1703 & 0.89 \\ 
\multicolumn{1}{c|}{} & \textbf{Wisconsin} & 251 & 499 & 5 & 1703 & 0.84 \\ 
\multicolumn{1}{c|}{} & \textbf{Actor} & 7600 & 33544 & 5 & 931 & 0.76 \\ 
\hline
\end{tabular}
\label{tab:dataset_char}
\end{table*}
 
    

\noindent\textbf{Data Splits.} In our experiments, we are strictly using the same training and testing environment across different baselines. Following the official data split provided by \cite{peigeom}, heterophilic graph datasets (Texas, Squirrel, Chameleon, Cornell, Wisconsin, Actor) and homophilic graph datasets (Cora, CiteSeer, PubMed, OGBN-Arxiv) are split roughly by $48:32:20$. Two large-scale heterophilic graph datasets (arxiv-year and snap-patents) are following \cite{lim2021large}, using a $50:25:25$ random data split. All datasets are provided with masks in the newest version of Pytorch Geometrics.

\noindent\textbf{Implementation Details.} We employ the Adam optimizer~\cite{kingma2014adam} with a learning rate of 0.01 and a weight decay of 5e-4 for training the model. The default training setting for all models is performed using torch-geometric default masks (train, validation, and test) and trained for a maximum of 500 epochs, a hidden dimension of 64, a dropout rate of 0.5, and a number of layers of 2. Early stopping is applied with a patience of 50 epochs, which monitors the validation loss and terminates the training if there is no improvement observed within the specified patience. We employ each baseline method as a representation learner, subsequently concatenating their outputs with a two-layer MLP to perform the node classification task. To assess the performance of the models, we use the standard accuracy metric, which is commonly adopted in node classification tasks. 

\noindent\textbf{Datasets.} We evaluate our model using several standard graph datasets originated from~\cite{fout2017protein,garcia2016using,rozemberczki2021multi,lim2021large,hu2020open,leskovec2016snap}: Cora, Citeseer, PubMed, OGB-Arxiv, Snap-Patents, Arxiv-Year, Texas, Squirrel, Chameleon, Cornell, Wisconsin, and Actor. These datasets are diverse in terms of their sizes and degrees of heterophily. Specifically, homophilic graph datasets are Cora, Citeseer, and PubMed, while heterophilic graph datasets include Texas, Squirrel, Chameleon, Cornell, Wisconsin, and Actor. To demonstrate the effectiveness of \name\ on larger scale graph datasets, we also evaluate our model on Penn94~\cite{lim2021large}, Arxiv-Year, OGBN-Arxiv~\cite{hu2020open}, and patent networks~\cite{leskovec2016snap} datasets. 

\noindent\textbf{Comparison Methods.} We benchmark our model against a variety of established methods. For homophilic graphs, we consider: GCN~\cite{kipf2017semi}, GAT~\cite{velickovic2018graph}, GPRGNN~\cite{chien2021adaptive}, ChebNet~\cite{tang2019chebnet}/ChebNetII~\cite{he2022chebnetii}, APPNP~\cite{DBLP:conf/iclr/KlicperaBG19}, GCN-JKNet~\cite{DBLP:conf/icml/XuLTSKJ18}, GraphSage~\cite{hamilton2017inductive}, and FAGCN~\cite{li2021beyond}. 
For heterophilic graphs, we explore: $\text{H}_{2}$GCN~\cite{zhu2020beyond}, BM-GCN \cite{he2022block}, BernNet~\cite{NEURIPS2021_76f1cfd7}, and $\text{G}^{2}\text{-GCN}$. 
For graph transformer based methods, we compare with: NAGphormer~\cite{chennagphormer}, VCR-Graphormer~\cite{fu2024vcr}, Exphormer~\cite{shirzad2023exphormer}, Polyphormer~\cite{liu2023polyformer} and GOAT~\cite{kong2023goat}. Lastly, we incorporate MLP.

\section{Details of Parameter Settings and Tuning}\label{sec:paramset}
 We perform a grid search to find the optimal hyperparameters for our model, including the learning rate, patch size, and filter number. The search space is defined as follows: the learning rate $\in \{0.01, 0.008, 0.005, 0.003, 0.001\}$, the patch size $\in \{8, 16,32,64,96\}$, and the filter number $\in \{10,50,100,150,200\}$. For Fast-\name, we set $c$ as 0.5 to balance the spectral response across different frequency components. The optimal hyperparameters are determined based on the performance of the validation set.

\begin{figure}[t]
    \centering
    \begin{subfigure}[t]{0.4\linewidth}
        \centering
        \includegraphics[width=\linewidth]{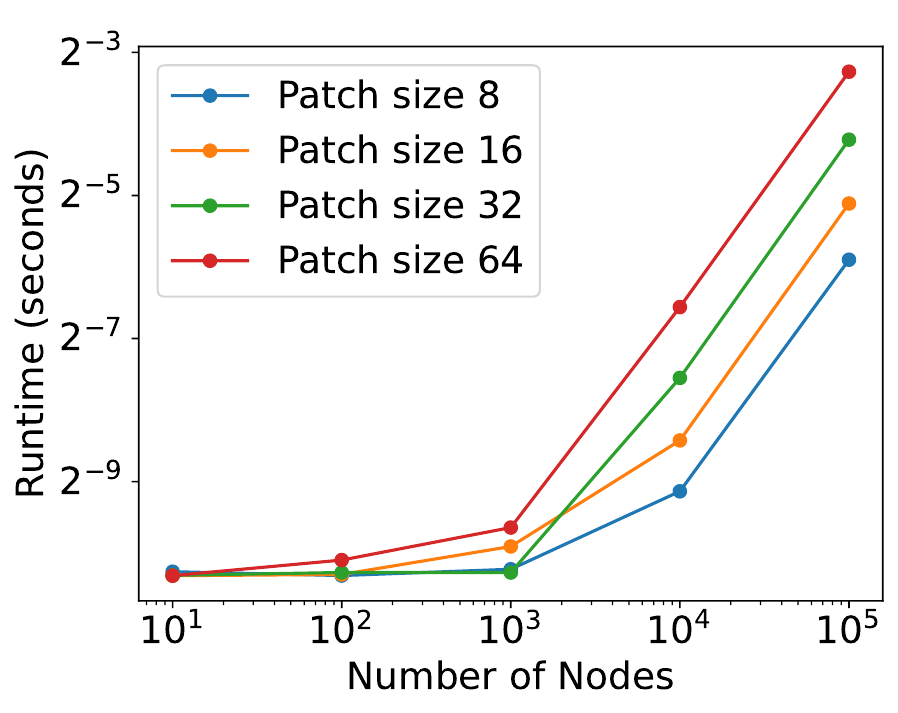}
        \caption{Scalability analysis of Fast-\name. The x-axis shows the number of nodes processed, and the y-axis shows the running time per iteration.}
        \label{fig:label_perm}
    \end{subfigure}
    \hfill
    \begin{subfigure}[t]{0.58\linewidth}
        \centering
        \includegraphics[width=\linewidth]{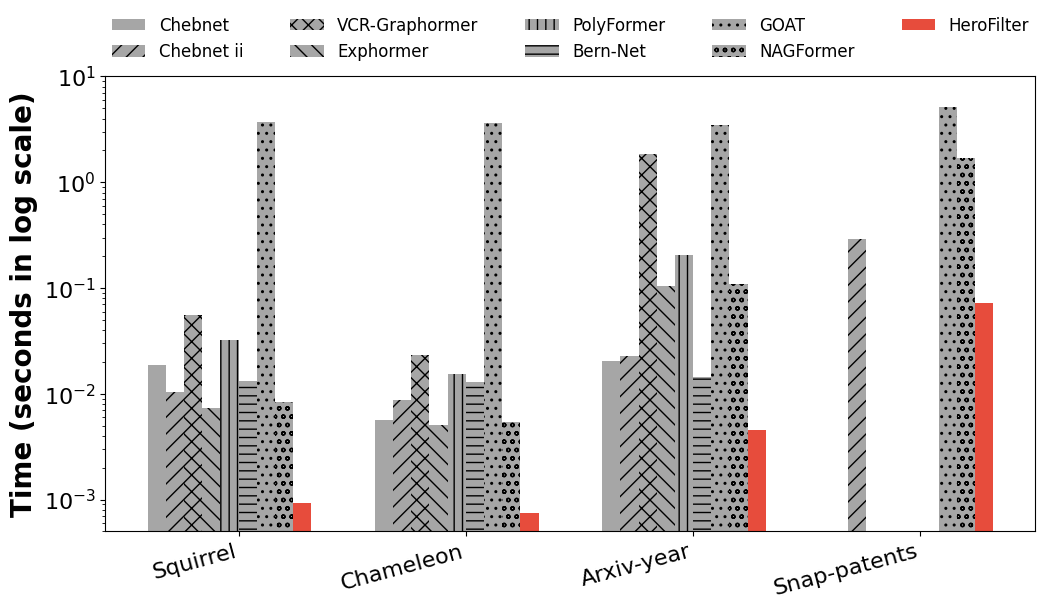}
        \caption{Runtime results for one epoch of inference (5 runs). Lower is better.}
        \label{fig:runtime}
    \end{subfigure}
    \label{fig:scalability_runtime}
\end{figure}

\begin{figure*}[t]
\centering
\includegraphics[width=0.9\linewidth]{figs/paras_reduced.pdf}
\caption{Parameter sensitivity analysis results for Cora (left) and Texas (right) datasets, respectively.}
\label{fig:param_sensitivity}
\end{figure*}

\section{Scalability and Runtime Analysis}\label{sec:runtime}
\noindent\textbf{Scalability Analysis on \name\ Mixer.}
We analyze the runtime of the algorithm as the number of nodes and patch size varies to assess the scalability of our proposed approach in Figure~\ref{fig:label_perm}. We perform experiments on synthetic datasets with varying numbers of nodes from \{10, 100, 1,000, 10,000, 100,000\}. We also consider different patch sizes, specifically \{8, 16, 32, 64\}. The runtime results are reshaped into a matrix for visualization purposes. Figure~\ref{fig:label_perm} shows the runtime in seconds on a logarithmic scale for the different patch sizes as a function of the number of nodes. As expected, the runtime increases with the number of nodes and patch size. Nevertheless, Fast-\name\ demonstrates reasonable scalability as the growth in runtime is sub-linear, indicating that our approach can handle large-scale graph datasets effectively. The results also highlight the importance of selecting an appropriate patch size to balance the trade-off between computational efficiency and performance.

\noindent\textbf{Runtime Analysis on \name\ Mixer.}
We further evaluate the runtime performance of \name~against existing filter-based and graph transformer methods across four datasets: Squirrel, Chameleon, Arxiv-year, and Snap-patents. Since \name~patcher can be computed offline as a preprocessing step, we only record the inference time of \name\ mixer. Shown from Figure~\ref{fig:runtime}, our findings reveal that \name\ consistently achieves the fastest inference time across all datasets. Specifically, on Squirrel and Chameleon, \name~executes in 0.92 and 0.75 milliseconds, respectively, outperforming the next fastest method (Exphormer) by 8 times. For larger datasets like Arxiv-year and Snap-patents, \name~maintains its efficiency (45.60 and 71.66 milliseconds), while transformer-based methods like VCR-Graphormer either require significantly more time (1832.73 milliseconds) or fail to complete due to out-of-memory (OOM) errors.

\section{Parameter Sensitivity Analysis}\label{sec:paramsense}
We conduct a parameter sensitivity analysis to investigate the impact of patch size and the number of filters in Figure~\ref{fig:param_sensitivity}. The analysis was performed on two datasets, namely the Cora and Texas datasets. We varied the patch size from \{8, 16, 32, 64\} and the number of filters from \{10, 50, 100, 200\}. We measure the accuracy of the model for each combination of parameters. Figure~\ref{fig:param_sensitivity} shows the results of the parameter sensitivity analysis. The plots reveal that for both datasets, increasing the patch size and the number of filters generally leads to higher accuracy, although the improvements tend to plateau beyond certain values. This indicates that our model can effectively handle different patch sizes and filter configurations while highlighting the importance of selecting appropriate parameters to achieve optimal performance.




\end{document}